\documentclass[11pt]{article}
\usepackage[utf8]{inputenc}
\usepackage[T1]{fontenc}
\usepackage{lmodern}
\usepackage[margin=1in]{geometry}
\usepackage[authoryear,round]{natbib}
\usepackage{amsmath,amssymb,amsfonts,amsthm,mathtools}
\usepackage{bbm}
\usepackage{booktabs}
\usepackage{array}
\usepackage{multirow}
\usepackage{graphicx}
\usepackage{subcaption}
\usepackage{wrapfig}
\usepackage{float}
\usepackage{adjustbox}
\usepackage{enumitem}
\usepackage[normalem]{ulem}
\usepackage{nicefrac}
\usepackage{microtype}
\usepackage{xcolor}
\usepackage{url}
\usepackage{tikz}
\usetikzlibrary{arrows}
\usepackage{algorithm}
\usepackage{algpseudocode}
\usepackage[most]{tcolorbox}
\usepackage{hyperref}
\usepackage[capitalize,noabbrev]{cleveref}

\newcolumntype{P}[1]{>{\centering\arraybackslash}p{#1}}
\newcolumntype{M}[1]{>{\centering\arraybackslash}m{#1}}

\def\ddefloop#1{\ifx\ddefloop#1\else\ddef{#1}\expandafter\ddefloop\fi}
\def\ddef#1{\expandafter\def\csname bb#1\endcsname{\ensuremath{\mathbb{#1}}}}
\ddefloop ABCDEFGHIJKLMNOPQRSTUVWXYZ\ddefloop
\def\ddef#1{\expandafter\def\csname c#1\endcsname{\ensuremath{\mathcal{#1}}}}
\ddefloop ABCDEFGHIJKLMNOPQRSTUVWXYZ\ddefloop

\def\E{\mathbb{E}}

\def\eps{\epsilon}

\definecolor{darkblue}{rgb}{0.0,0.0,0.65}
\definecolor{darkred}{rgb}{0.68,0.05,0.0}
\definecolor{darkgreen}{rgb}{0.0,0.29,0.29}
\definecolor{darkpurple}{rgb}{0.47,0.09,0.29}
\hypersetup{
   colorlinks = true,
   citecolor  = darkblue,
   linkcolor  = darkred,
   filecolor  = darkblue,
   urlcolor   = darkblue,
}

\theoremstyle{plain}
\newtheorem{theorem}{Theorem}[section]

\newtheorem{lemma}[theorem]{Lemma}

\theoremstyle{definition}
\newtheorem{definition}[theorem]{Definition}

\theoremstyle{remark}
\newtheorem{remark}[theorem]{Remark}

\Crefname{algorithm}{Algorithm}{Algorithms}

\title{Towards Distillation Guarantees under Algorithmic Alignment for Combinatorial Optimization}

\author{
Thien Le\\
SEAS, Harvard University\\
Cambridge, MA\\
\texttt{thien\_le@seas.harvard.edu}
\and
Melanie Weber\\
SEAS, Harvard University\\
Cambridge, MA\\
\texttt{mweber@seas.harvard.edu}
}

\date{}

\begin{document}

\maketitle

\begin{abstract}
Distillation transfers knowledge from a large model trained on broad data to a smaller, more efficient model suitable for deployment. In structured prediction settings, prior knowledge about the task can guide the choice of a target architecture that is algorithmically aligned with the underlying problem. Building on recent learning-theoretic analyses of decision-tree (DT) distillation \citep{boixadsera2024theorymodeldistillation}, we study when distillation succeeds for combinatorial optimization tasks. We focus on the case where the target model is a graph neural network whose architecture is aligned with a dynamic programming (DP) algorithm for the task. Assuming that the source model is sufficiently rich, formalized through the linear representation hypothesis (LRH) \citep{elhage2022toy,park2024lrh}, we show that the distillation problem can be solved efficiently in the complexity parameters of the DP transition function, represented as a DT. Our results provide a rigorous sufficient condition for successful distillation in the flavour of algorithmic alignment.  
\end{abstract}

\section{Introduction}

Modern machine learning often benefits from incorporating useful structure into the learner, as demonstrated by a wide range of models, from convolutional architectures in vision to graph neural networks (GNNs) for relational data \citep{krizhevsky2012imagenet,lecun2015deep,goodfellow2016deep,gilmer2017gnn,kipf2017semisupervised}. At the same time, the success of general-purpose models has made it natural to ask whether such structure can instead be learned from data and subsequently transferred to smaller models \citep{bachmann2023scaling,brehmer2025does,hinton2015distilling,jiao2020tinybert}. The ``learn first, distill later'' paradigm offers a middle ground: train a large source model, then distill its knowledge into a target model whose architecture embodies the desired inductive bias.

In this work, we ask when the knowledge learned by a large neural network on graph algorithmic tasks can be distilled into a smaller graph model whose architecture is aligned with the underlying algorithm. We answer this question affirmatively for a class of local-iteration graph algorithms, a dynamic-programming abstraction that encompasses simple reachability-type computations. In particular, we show that if the learned source representation linearly exposes the elementary components of the algorithm, then PAC-distillation \citep{boixadsera2024theorymodeldistillation} into an algorithmically aligned GNN is tractable in the complexity parameters of the local transition rule, under the restricted parameter regime made explicit below. This provides a rigorous sense in which the target architecture not only compresses the source model, but also leverages the algorithmic structure of the task.

\subsection{Contribution of this paper}

We focus on the setting in which the ground truth is computed by a \textit{local-iteration algorithm} (\Cref{alg:local_iteration}). This is a dynamic programming (DP) algorithm whose states are indexed by pairs $(t,v) \in [l]\times V$, where $t$ denotes an iteration of message-passing and $v$ is a vertex of the input graph $G=(V=[n],E)$. We assume that the DP transition function is represented by a small decision tree of depth $r$, for instance, in the DP that solves graph reachability.

\begin{algorithm}[t]
\caption{Local-iteration algorithm $\mathcal{A}^l[g]$}
\label{alg:local_iteration}
\textbf{Input} Initialization vector $\textsc{Init}$, graph $G=([n],E)$\\
\textbf{Output} $\{0,1\}$ classification
\begin{algorithmic}
\ForAll{$v \in [n]$}
  \State $h_{v,0} \gets \textsc{Init}(v)$
\EndFor
\For{$t = 1$ to $l$}
  \ForAll{$v \in [n]$}
    \State $h_{v,t} \gets g((h_{u,t - 1})_{u \in \mathcal{N}(v)}, G, v)$  \Comment{$h_{v,t}$ denotes state of $v$ after iteration $t$}
  \EndFor
\EndFor
\State \textbf{return} {$h_{n,l}$}
\end{algorithmic}
\end{algorithm}


The central question that we address in this paper is as follows:

\begin{tcolorbox}
Suppose the local transition rule of a local-iteration graph algorithm is given by a decision tree $g$. Can a large model trained on tasks generated by this algorithm be tractably distilled into a smaller graph model, with complexity controlled by the complexity of $g$?
\end{tcolorbox}

We make the following contributions:
\begin{enumerate}
    \item We give a sufficient condition, based on a LRH for the source network, that enables distillation of certain local algorithms. This condition, which we call \emph{local-iteration alignment} (\Cref{def:local_iteration_alignment}), requires the source representation to linearly represent simple functions of the form $\mathcal{A}^l[b]$, where $b$ is a single conjunction (\Cref{fig:local_iteration}, right).
    \item We prove that under this condition, distillation from the large source network to a GNN is tractable in a restricted structural regime (\Cref{thm:efficient_distillation}). In contrast, efficiency is not guaranteed under algorithmic mismatch, such as when learning the same local-iteration computation with ordinary decision trees (\Cref{lem:gnn_vs_dt}).
    \item We give an explicit $(\epsilon,\delta)$-distillation algorithm for the framework (\Cref{alg:distillation}). It draws $\text{poly}(1/\epsilon, \log(1/\delta), n, l, r)$ samples and runs in time $\text{poly}(s^n, 2^{nlr},1/\epsilon, \log(1/\delta))$ where $n$ is the size of the graph, $s$ and $r$ are the size and depth of the true inner tree and $l$ is the number of rounds of message-passing. 
\end{enumerate}

We complement our theoretical results with experiments that primarily examine whether the proposed LRH assumption emerges in a learned source model, together with a small end-to-end implementation of the proposed distillation algorithm.



\begin{figure}
    \centering
    \includegraphics[width=0.7\linewidth]{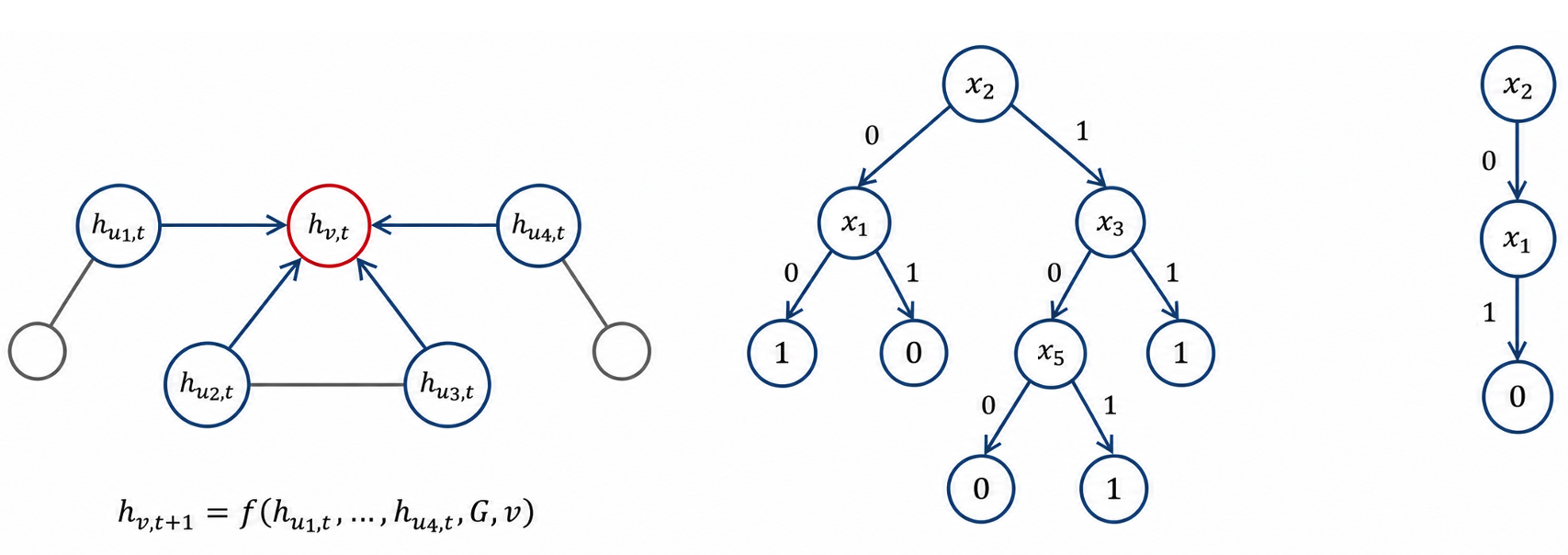}
    \caption{
    \textbf{Illustration of \Cref{alg:local_iteration}.} (Left) A GNN that aggregates neighboring information at each iteration. Its architecture reflects the inductive bias of our local-iteration algorithms. (Middle) In our setting, the aggregation function $g$ is represented by a decision tree. (Right) A root-prefix path in the decision tree. The path in this example can be viewed as a degenerate decision tree that outputs $0$ (the leaf constant) iff $\neg x_2 \wedge x_1$ ($x_2$ is negated because it takes the edge with label $0$, $x_1$ takes the edge labeled $1$ and thus not negated). Our hypothesis requires the source model to be rich enough to linearly represent simple local-iteration algorithms whose aggregation functions correspond to such root-prefixed paths 
    }
    \label{fig:local_iteration}
\end{figure}
\subsection{Related works (See Extended Background \Cref{app:extended_background} for more details)}

\textbf{Distillation.} Model distillation asks whether the behavior of a large trained model can be transferred to a smaller target model \citep{hinton2015distilling,jiao2020tinybert}. We adopt the PAC-distillation framework of \citet{boixadsera2024theorymodeldistillation}, which assumes that the source model exposes useful latent features through a LRH~\citep{elhage2022toy,park2024lrh}. This hypothesis is motivated by the empirical observation that neural representations often encode meaningful concepts as linear directions, as seen for example in word embeddings and debiasing applications \citep{mikolov2013word2vec,tolga2016man,manzini2019black,liang2020debiasing,chuang2023debiasing}. In our setting, the relevant linear features are not arbitrary concepts: they are simple local-iteration subroutines.

\textbf{Neural Combinatorial Optimization.} 
Many graph optimization algorithms, such as Bellman-Ford, iteratively propagate local information across the graph, much like the layers of a message-passing GNN. An aligned model can therefore reuse this algorithmic scaffold and focus on learning the local update rule \citep{xu2020algorithmicalignment}. This perspective has motivated theoretical work on algorithmic alignment \citep{valiant1984pac,dudzik2022gnnsdp,dudzik2024cocycles} as well as practical neural heuristics \citep{kahng2024steiner,nerem2025graphneuralnetworksextrapolate,he2025primaldual,gasse2019co}. We study the same principle in a distillation setting: rather than asking whether a GNN can learn an algorithm from scratch, we ask when a trained source model can be efficiently distilled into a GNN.

\section{Preliminaries}

In general, we denote by $\mathcal{X}$ the input set and by $\mathcal{Y}$ the label set. In this paper, we focus on binary classification, i.e., $\mathcal{Y} = \{0,1\}$. We will write $[n] := \{1, 2, \ldots, n\}$. We define $\mathcal{G} = \mathcal{G}_n$ as the space of all simple labeled graphs on $n$ vertices. 
For some dimension $m$, we say that a neural network defines an (abstract) latent representation $\varphi: \mathcal{X} \to \mathbb{R}^m$. 

Following standard PAC-learning notation, we will usually denote by $\mathcal{C}$ the concept class (class of possible ground truths), $\mathcal{H}$ the hypothesis class (output range of the learning algorithm), and $\mathcal{D}_c \in \mathcal{P}({\mathcal{X}} \times \mathcal{Y})$ the input distribution for some ground truth $c \in \mathcal{C}$. When there is a vector of many inputs elements $S \in \mathcal{X}^N$ we apply $c$ pointwise and write $c(S) := (c(S_i)_{i \in [N]})$. In the setting of distillation, we have a source class $\mathcal{F}$ and a target class $\mathcal{H}$. 

\paragraph{PAC learning} In the traditional PAC learning framework~\citep{valiant1984pac}, the \emph{concept class} is $(\epsilon, \delta)$-learnable with $n$ samples if there is an algorithm $\mathcal{A}$ such that for any distribution $\mathcal{D}$ over the input and any concept in $c \in \mathcal{C}$, 
\begin{equation}
    \Pr_{S \sim \mathcal{D}^n} [\textup{error}_{c,\mathcal{D}} (\mathcal{A}(S, c(S)) \leq \epsilon] \geq 1- \delta.
\end{equation}
Here, the error function is the $0$-$1$ population risk: $\textup{error}_{c, \mathcal{D}}(f) := \Pr_\mathcal{D}[f(x) \neq c(x)]$. 

\paragraph{PAC-distillation} PAC-distillation~\citep{boixadsera2024theorymodeldistillation} is a relaxation of PAC learning in which one assumes access to a successful source model class $\mathcal{F}$ to train a target class $\mathcal{H}$ by finding an algorithm $\mathcal{A}$ such that for any distribution $\mathcal{D}$ on $\mathcal{X}$, any source $f \in \mathcal{F}$, 
\begin{equation}
    \Pr_{S \sim \mathcal{D}^n} [\textup{error}_{f,\mathcal{D}} (\mathcal{A}(S, f)) \leq \epsilon] \geq 1- \delta.
\end{equation}
Such an algorithm is said to $(\epsilon, \delta)$-distill $\mathcal{F} \to \mathcal{H}$. Note that since the algorithm has access to the successful model $f$, giving a PAC distillation algorithm is easier than giving a PAC algorithm since one can just use $f$ to query labels and simulate PAC learning. The advantage of this framework is to sidestep some of the hardness results of PAC learning by leveraging 
structures in the class $\mathcal{F}$, such as the LRH that is widely observed in practice \citep{elhage2022toy,tolga2016man}. Concretely, \citet{boixadsera2024theorymodeldistillation} was able to obtain a PAC-distillation algorithm for decision trees in time $\mathrm{poly}(d, 2^r)$ (\Cref{thm:dt_distillation}) even though it is open whether they are PAC-learnable in less than $d^{O(r)}$ time \citep{weisberg2020distributionfree}

In practice, $\mathcal{F}$ can be thought of as large, pre-trained neural networks that have achieved low errors on some tasks, and the target class $\mathcal{H}$ can be understood as a function class with inductive bias that can more efficiently represent the ground truth, for example, invariant neural networks such as CNNs or GNNs. Distillation then asks if there are efficient algorithms to find a good representation of the ground truth in the target class.

\paragraph{Algorithmic alignment and LRH} In this paper, we view algorithmic alignment through the lens of the LRH, formally defined as follows: 
\begin{definition}[$\tau$-LRH~\citep{boixadsera2024theorymodeldistillation}]\label{def:lrh}
    Fix a source neural network $f: \mathcal{X} \to \mathcal{Y}$ and let $\varphi: \mathcal{X} \to \mathbb{R}^m$ be the latent representation of $f$. Let $\mathcal{Z}$ be a set of functions $z: \mathcal{X} \to \mathcal{Y}$. For any $\tau > 0$, we say that $f$ satisfies $\tau$-LRH for features $\mathcal{Z}$ if for all $z \in \mathcal{Z}$, there exists $w \in \mathbb{R}^m$ such that $\|w\| \leq \tau$ and $\langle w, \varphi(x) \rangle = z(x)$ for all $x \in \mathcal{X}$.
\end{definition}

In our setting, we think of $f$ as a large foundation model and aim to train a small structured model, specifically a GNN, given minimal access to $f$, under a guaranty that their abstract latent representation\footnote{The penultimate layer of a neural net, or the collection of all its activations, can act as the network's latent representation. Since the latent dimension $m$ appears in the complexity bounds, one should choose the smallest latent representation that still satisfies $\tau$-LRH in order to optimize these bounds.} is rich enough to capture the target ground truths (a combinatorial optimization task) with just another linear layer. 
Surprisingly, we show that distillation is theoretically possible in this setting.

\section{Structured distillation framework}
All graphs in this section are labeled and have $n$ vertices; we denote this collection by $\mathcal{G}$. For notational convenience, we assume $n$ is a power of $2$. 

We begin by giving precise definitions of some concepts informally discussed earlier:
\begin{definition}[Neural networks that compute graph algorithms]
    A neural network $\nu: \mathcal{X}\times \mathcal{G}  \to \mathcal{Y}$ \emph{computes the graph algorithm} $\mathcal{A}$ if $\nu$ agrees with $A$ on all inputs of size $n$. It is \emph{efficient} if it can be evaluated in polynomial time in $n$.
\end{definition}

\begin{definition}[Local-iteration algorithm]\label{def:local_iteration_algorithm}
    Denote by $\mathcal{M}(\{0,1\})$ the set of multisets of elements in $\{0,1\}$. For any multiset-function $g: \mathcal{M}(\{0,1\}) \times \mathcal{G} \times [n] \to \{0,1\}$, let $\mathcal{A}^l[g]:  \{0,1\}^n \times \mathcal{G} \to \{0,1\}$ be the graph-input algorithm that computes \Cref{alg:local_iteration}. 
\end{definition} 

For the remainder of the paper, we assume that the stopping time $l$ is fixed and known \textit{a priori}. In this setting, each vertex has a one-bit hidden representation, which is updated by $g$ at each round of aggregation. Although some of our results extend to multi-bit hidden representations, we focus on the one-bit case for simplicity.

We will consider the following specialization of $\tau$-LRH, which is intuitive in the context of graph algorithms:
\begin{definition}[Local-iteration alignment]\label{def:local_iteration_alignment}
     Fix a source neural network $\nu \in \mathcal{F}$ and let $\varphi: \mathcal{X} \to \mathbb{R}^m$ be the latent representation of $\nu$. Let $Z$ be a set of `key features' $z:\{0,1\}^n \times \mathcal{G} \to \{0,1\}$. For any $\tau > 0$, we say that $\nu$ satisfies $\tau$-local-iteration alignment for $Z$ if for all $z \in {Z}$, there exists a $w \in \mathbb{R}^m$ with $\|w\| \leq \tau$ and $\langle w, \varphi(x) \rangle = \mathcal{A}^l[z](x)$ for all inputs $x$. 
\end{definition}

Lastly, we define decision trees and root-prefix paths:
\begin{definition}
    A decision tree $T: \{0,1\}^d \to \{0,1\}$ is a labeled rooted binary tree with leaves labeled $0$ or $1$ and internal vertices labeled by \textit{literals}\footnote{a \textit{literal} of a Boolean variable $x_i$ is either $x_i$ or $\neg x_i$} of its input variables $x_1 \ldots x_d$. For each input $x \in \{0,1\}^d$, $x$ takes a root-prefix path to arrive at some leaf that specifies the output of the tree on that input.
\end{definition}
\begin{definition}
    In a decision tree $T$ of depth $r$ rooted at $\textsc{root}$, a \textit{root-prefix path} $S$ is a clause, i.e., a tuple of literals, $S =(p_1, \ldots, p_{r'})$ for some $r' \leq r$, where each $p_i \in \{x_1, \ldots, x_d, \neg x_1, \ldots, \neg x_d\}$, such that $p_1 = \textsc{root}$ and $S$ forms a path from the root to some vertex in $T$. The path need not reach a leaf.
\end{definition}

\subsection{Concept class, source class and target class}
We consider the following concept class, i.e., the collection of possible ground truths:
\begin{equation}
    \mathcal{C}_{s,r} = \{\mathcal{A}^l[T] \mid \text{ $T$ is a decision tree with depth $r$ and size $s$}\}
\end{equation}
We address the conditions that ensure that $T$ is a well-defined aggregator, as well as how different choices of $T$ generalize the setting of \Cref{alg:local_iteration}, in the next subsection.

To define the source class, we first specify which features are linearly represented by the source functions. Following \citet{boixadsera2024theorymodeldistillation}, for a decision tree $T$, we take the features of $T$ to be its root-prefix path conjunctions:
$Z'_T := \{\bigwedge_{p_i \in S} p_i \mid S \text{ is a root-prefix path in } T \}$.

The corresponding features for local-iteration algorithms are then
\begin{equation}
    Z_T := \left\{\mathcal{A}^l[b] \mid b \in Z'_T\right\}. \label{eq:key_features}
\end{equation}
We postulate that these loops over prefix-path conjunctions are simply representable by the neural network's latent representation.


Let $\mathcal{F}^\tau_{s,r}$ denote the class of source networks that implicitly compute $\mathcal{A}^l[T]$ for some decision tree $T$ of size $s$ and depth $r$ satisfying $\tau$-local-iteration alignment with features $Z_T$:
\begin{equation}
    \mathcal{F}^\tau_{s,r}
    =
    \{ f \mid \exists T \text{ such that } f \text{ implicitly computes } \mathcal{A}^l[T] \}.
\end{equation}

The target class is the same as the concept class. In practice, this class is a subset of GNNs, so the distillation process can be viewed as distilling learned neural networks into GNNs.
Conceptually, $\mathcal{C}_{s,r}$ is the class of possible ground-truth functions, not the architecture itself. In our setting the two coincide operationally because every element of $\mathcal{C}_{s,r}$ is computed by a local-iteration procedure that can be represented by the aligned GNN target class.

\textbf{On the aggregation function $T$.}\label{subsec:T}
When using a decision tree $T$ as an aggregation function, some well-definedness issues must be addressed. 

First, by \Cref{def:local_iteration_algorithm}, the input to the aggregator $g$ may have variable length, since different vertices can have different numbers of neighbors. Because the graphs are simple, $g$ takes at most $n$ inputs. This can be handled in several ways. For example, one can define $n$ different trees $T^i$, for $i \in [n]$, one for each input length, and pass an additional $\log n$ bits indicating which subtree $T^i$ to use. This increases the depth by at most an additive $\log n$ factor over the maximum depth of the trees $T^i$. 

Second, \Cref{def:local_iteration_algorithm} requires a multiset function, whereas decision trees take ordered tuples as input. In practice, this corresponds to using node-ids to break symmetry, making our problem more general to study from a statistical learning perspective \citep{kiani2024hardness}.

Third, our setup also applies to the more general case in which $T$ is non-local; for example, $T: \{0,1\}^d \times \mathcal{G} \times [n]$ may take as input the full list of hidden representations $h$, together with the current node-id. In this global setting, one can decouple $d$ from $n$ by allowing \Cref{alg:local_iteration} to include extra input bits that are passed through to $T$. This lets us fix $n$ without also fixing the complexity parameters of the decision tree $T$.

Finally, by the law of currying, our set-up in \Cref{alg:local_iteration} works  even without parameter sharing restriction. In other words, we can have a separate tree $T_v$ for each vertex $v$, and compose these per-vertex trees with a vertex selector tree of size $n$ and depth $\log n$ (\Cref{fig:vertex_selector}). Therefore, although using bounded depth (and size) $T$ as an aggregator loses some expressivity compared to using a neural network, our setting allows for other generalizations, namely symmetry breaking with node-id and the use of per-vertex aggregators. 

\textbf{On the key-features $Z_T$.}
The `simple' features that are linearly representable by the source network (\Cref{eq:key_features}) take the form of the template $\mathcal{A}^l$ applied to a conjunction $\bigwedge_{i = 1}^{s'} p_i$ for some root-prefix path $p = (p_1,\ldots,p_{s'})$. A conjunction can be viewed as a degenerate decision tree that is a single path: it outputs $1$ exactly when all literals on the path are satisfied. By our parameterization in the previous remark, if $s' \leq \log n$, then the hidden representations $h$ are all a partition of the vertex set; otherwise, the feature checks a specific conjunction of bits in the previous hidden representation. {Such functions are expected to be learnable by our source class in the aligned regime, and we experimentally test this in \Cref{subsec:experiments} `Validating the LRH'. }

\section{Efficient distillation}
\subsection{GNNs are more efficient than decision trees}
The following simple fact gives a separation between GNNs and decision trees:
\begin{lemma}\label{lem:gnn_vs_dt}
    There exists a simple decision tree $T : \{0,1\}^{n} \times \{0,1\}^{\binom{n}{2}} \to \{0,1\}$ that can be evaluated in polynomial time in $n$ such that, for some constant $l$, $\mathcal{A}^l[T]$ can be evaluated in polynomial time, but it cannot be represented by decision trees of polynomial size.
\end{lemma}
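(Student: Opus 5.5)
The plan is to exhibit a local-iteration algorithm computing a function whose decision-tree complexity is provably exponential, while the local rule $T$ itself is small. The natural choice is the parity function. A decision tree computing parity of $k$ bits must query all $k$ bits on every root-to-leaf path. Flipping any unqueried bit changes the output, so every leaf lies at depth $k$ and the tree has at least $2^k$ leaves. It therefore suffices to make $\mathcal{A}^l[T]$, as a function of its input bits (the initialization vector $\textsc{Init}\in\{0,1\}^n$ together with the $\binom{n}{2}$ edge bits), contain parity of $\Omega(n)$ bits as a restriction. Restricting a decision tree by fixing some inputs never increases its size, so a lower bound for the restriction transfers to $\mathcal{A}^l[T]$.

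\textbf{Construction.} For the aggregator I take the non-local form allowed in the remarks on the aggregation function $T$: $T$ sees the full previous hidden vector $h\in\{0,1\}^n$, the adjacency bits, and the current vertex id. I take $l=1$, or any constant, and define the update at vertex $n$ (the returned vertex) to be the XOR of the previous states $h_{u,0}$ over its neighbours $u$. The updates at other vertices are irrelevant and can be anything, e.g.\ copying. Taken literally as a single decision tree, XOR over up to $n$ neighbours needs size $2^{\Theta(n)}$, which is not ``simple''. I see two ways to handle this.

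\emph{Option (a).} Read ``simple'' as ``evaluable in polynomial time'', which is how the statement is phrased. The XOR aggregator is then computable in $O(n)$ time, and $\mathcal{A}^l[T]$ runs in $O(ln^2)$ time by simulating \Cref{alg:local_iteration}.

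\emph{Option (b).} Keep $T$ of small depth and get parity from iteration instead. Put a path $1-2-\cdots-n$ into the graph. Let the rule at vertex $v$ be $h_{v,t}=h_{v-1,t-1}\oplus x_v$, where $x_v$ is a pass-through input bit (the remarks on $T$ allow extra pass-through bits). This rule has depth $2$ and size $O(1)$ per vertex, or size $O(n)$ after adding the vertex selector of depth $\log n$ (\Cref{fig:vertex_selector}). After $l=n$ rounds, $h_{n,n}=\bigoplus_v x_v$. This needs $l$ growing with $n$, so it conflicts with ``constant $l$''.

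Since the statement fixes a constant $l$, I will commit to option (a) with $l=1$. I will remark that option (b) shows the same separation with a constant-size local tree at the cost of $l=n$.

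\textbf{Lower bound.} Fix the graph to be the star centred at vertex $n$, i.e.\ fix the edge bits to $1$ on edges $\{u,n\}$ and $0$ elsewhere, and let the $\textsc{Init}$ bits vary. The output $h_{n,1}$ then equals $\bigoplus_{u<n}\textsc{Init}(u)$, which is parity on $n-1$ free bits. By the restriction argument, any decision tree for $\mathcal{A}^1[T]$ over the full input has size at least $2^{n-1}$, which is superpolynomial.

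\textbf{Main obstacle.} The delicate step is reconciling the word ``simple'' (small size/depth) with the requirements ``constant $l$'' and ``$T$ polynomial-time evaluable''. Under the most literal reading, that $T$ is a decision tree of polynomial size, a constant number of compositions of polynomial-size trees over polynomially many inputs already yields polynomial-size trees in some settings. The separation must therefore come either from non-tree evaluation of $T$ (option (a)) or from $l$ growing with $n$ (option (b)). I will state explicitly which reading is used and present (a) as the main proof, with (b) as a remark.
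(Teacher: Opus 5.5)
Your option (a) is internally sound, but it proves only a degenerate form of the lemma.

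\textbf{Why option (a) misses the point.} With $l=1$ the algorithm returns $h_{n,1}$, which is just $T$ evaluated at vertex $n$ with $h=\textsc{Init}$. So any decision tree for $T$, with the vertex id fixed to $n$, is already a decision tree for $\mathcal{A}^1[T]$. Your $2^{n-1}$ parity bound is therefore a lower bound on $T$ itself. You have exhibited a large tree and called it ``simple'' by equating simplicity with polynomial-time evaluability, which makes the word redundant in the statement. The for-loop plays no role. The paper uses this lemma to show the opposite: a small transition rule becomes a large tree only through iteration, which is the algorithmic mismatch that motivates GNN targets. XOR is not rescued by the local form either. Even when $g$ sees only the tuple of neighbour states, parity of that tuple needs $2^{\deg(v)}$ leaves. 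Option (b) is a genuine separation but, as you note, needs $l=n$.

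\textbf{The false step.} The claim that pushed you to (a) is wrong: a constant number of compositions of polynomial-size trees need not stay polynomial. Plug the three-leaf tree for $A_{1u}\wedge A_{un}$ into a decision list at the node that queries $h_u$. The remainder of the list is then duplicated under both $0$-leaves, and this compounds along the list.

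\textbf{What the paper does.} The paper exploits exactly this with $2$-reachability: is there a path of length at most $2$ from $1$ to $n$? The aggregator is an OR of neighbour states, which is a linear-size decision list when $g$ reads the neighbour tuple, and $l=2$. The lower bound comes from certificate counting, not parity. Restrict to graphs whose only possible edges are $(1,n)$, $(1,v)$ and $(v,n)$. Every $0$-leaf must set $(1,n)$, and at least one edge of each pair $\{(1,v),(v,n)\}$, to absent. Hence each $0$-leaf covers at most a $2^{-(n-1)}$ fraction of this family. The $0$-inputs form a $(3/4)^{n-2}/2$ fraction, which forces at least $(3/2)^{n-2}$ leaves.

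\textbf{It also works under your literal reading.} Even in your global form, a $T$ of size $O(n)$ suffices. Behind a vertex selector, let each $v\notin\{1,n\}$ output $A_{1v}\wedge A_{vn}$. Let vertex $n$ output $A_{1n}\vee\bigvee_{u\notin\{1,n\}}h_u$, implemented as a caterpillar. Then $\mathcal{A}^2[T]$ is exactly $2$-reachability, independent of $\textsc{Init}$. Your restriction principle carries over unchanged. What has to change is the target function, which should be an OR of ANDs produced by the iteration, and the lower-bound technique.
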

The proof considers the $2$-reachability DP: Given a graph on $n \geq 2$ vertices, is there a path of length at most $2$ that connects the vertex labeled $1$ and $n$? The full proof is in Appendix~\ref{app:proof_of_gnn_vs_dt}. As a result, without the for-loop structure, one cannot just convert the concept class $\mathcal{C}_{s,r}$ into the class of efficient decision trees. This forms a type of algorithmic mismatch which we resolved in the next section, using GNNs. 

\subsection{Distillation under $\tau$-LRH}

We are now ready to state the main theorem.
\begin{theorem}\label{thm:efficient_distillation}
For any $\epsilon, \delta \in (0,1)$, there is an algorithm that $(\epsilon, \delta)$-distills from $\mathcal{F}^\tau_{s,r}$ to $\mathcal{C}_{s,r}$ and runs in time polynomial in $s^n, m, 1/\epsilon, d, 2^{nrl}, \log(1/\delta), \tau, B$ and uses a number of source-model samples polynomial in $1/\epsilon, s, n \log(d/\delta), \log(\tau B)$, where $m$ is the latent representation dimension and $B = \max_x \|\varphi(x)\|$. 
\end{theorem}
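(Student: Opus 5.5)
We adapt the decision-tree distillation algorithm of \citet{boixadsera2024theorymodeldistillation} (\Cref{thm:dt_distillation}). The strategy has three parts. First, we use the source model's representation $\varphi$ as a linear-probe oracle: this lets us recover the root-prefix paths of the unknown tree $T$ one literal at a time. Second, we assemble the recovered paths into a candidate tree $\hat T$. Third, we output $\mathcal{A}^l[\hat T]$ and certify it on fresh samples labeled by the source model.

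\textbf{Step 1: testing whether a candidate path is a feature.} Draw $N$ samples $x_1,\dots,x_N \sim \mathcal{D}$. Given a candidate clause $S=(p_1,\dots,p_{r'})$, evaluate the candidate feature $z_S := \mathcal{A}^l[\bigwedge_{i} p_i]$ directly on each sample; this costs $\mathrm{poly}(n,l)$ time per sample. Then solve a convex feasibility problem: does some $w$ with $\|w\|\le\tau$ satisfy $\langle w,\varphi(x_j)\rangle = z_S(x_j)$ for all $j$? By \Cref{def:local_iteration_alignment}, every true feature in $Z_T$ passes this test. A false clause may also pass on the sample. However, a uniform convergence argument for the class of norm-$\tau$ linear functionals on $\|\varphi\|\le B$ handles this: it has Rademacher complexity $O(\tau B/\sqrt N)$, union-bounded over the candidate clauses. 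This shows that any clause passing the test agrees with some linear function $\langle w,\varphi\rangle$ up to error $\epsilon'$ on $\mathcal{D}$. This is the source of the $\log(\tau B)$ and $\log(d/\delta)$ factors in the sample bound.

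\textbf{Step 2: growing the tree.} Starting from the empty clause, extend each accepted clause $S$ by every literal $\pm x_i$, $i\in[d]$, and keep the extensions that pass the test. Depth is capped at $r$. Any family of accepted clauses that closes into a tree with consistent leaves is a candidate $\hat T$. The leaf labels are chosen by querying $\nu$ on the samples; there are at most $2$ options per leaf.

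Because of the iteration $\mathcal{A}^l$, a linear test on a path does not decompose over the $l$ rounds the way it does for a single decision tree. We therefore fall back on brute-force enumeration of consistent candidates. The number of accepted clause families and their labelings is at most $s^n$ trees, counting ordered per-vertex trees as discussed in \Cref{subsec:T}, times $2^{nrl}$ possible bit-trajectories of the hidden states along depth-$r$ paths over $l$ rounds and $n$ vertices. This gives the claimed running time.

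\textbf{Step 3: empirical risk minimization over the candidates.} Among the enumerated candidates, select the $\hat T$ minimizing empirical disagreement with $\nu$ on a fresh sample. The true $T$ is among the candidates and has zero error, so this is realizable ERM over a finite class of size $\exp(\mathrm{poly}(s,n))$. Hence $O\bigl((\log|\text{class}|+\log(1/\delta))/\epsilon\bigr)$ samples suffice for error at most $\epsilon$.

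\textbf{Main obstacle.} The hardest step is Step 1's soundness: showing that spurious clauses passing the linear test do not blow up the candidate set or the sample complexity. The concern is that false clauses must not force an exponential number of samples. We first try to handle it via the margin-based uniform convergence bound, accepting the exponential computational enumeration, which the theorem statement permits, rather than a greedy argument.
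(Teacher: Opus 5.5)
There is a genuine gap, and it is exactly the step you flag as the main obstacle. Uniform convergence does not close it. A Rademacher bound only tells you that a clause passing your empirical test is approximately $\tau$-representable on $\mathcal{D}$. It says nothing about \emph{how many} clauses are, so nothing gets pruned. Concretely, the empty clause has feature $\mathcal{A}^l[\textsc{true}]\equiv 1$, so some $w_\emptyset$ with $\|w_\emptyset\|\le\tau$ satisfies $\langle w_\emptyset,\varphi(x)\rangle=1$ for all $x$. If $\mathcal{D}$ is a point mass at $x_0$, then $z_S(x_0)\,w_\emptyset$ passes your test for \emph{every} clause $S$. All $(2d)^i$ clauses of length $i$ then survive, and Step~2 costs $d^{\Theta(r)}$, which is precisely the bound distillation is supposed to beat. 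Your count ``$s^n$ trees times $2^{nrl}$ bit-trajectories'' has no derivation behind it.

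The missing idea is the paper's packing lemma (\Cref{lem:packing2}). It rests on an observation contrary to your remark that the test ``does not decompose over the $l$ rounds''. When $\mathcal{A}^l[\textsc{AND}_S]$ is unrolled, every round reads the DP table only at the indices named in $S$. So the feature is a junta on the at most $k=|S|$ input bits named by $S$, with top Fourier coefficient $\pm 2^{-\Theta(kl)}$ on $\chi_{I(S)}$. Distinct clauses then have orthogonal top-degree parts. Comparing a lower bound on $\det(W\Phi\Phi^\top W^\top)$ with the upper bound forced by $\|w\|\le\tau$ gives $|\mathcal{S}_k|\le 2^{\Theta(kl)}\,\mathrm{poly}(\tau,B)$.

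This is a Fourier argument under the \emph{uniform} measure. The probes must therefore run on uniformly random inputs, which the distiller can generate because it can evaluate $\varphi$ anywhere. They also need an error threshold of order $2^{-\Theta(il)}$ so the top coefficient survives, hence $2^{-il-3}$ in \Cref{alg:distillation}. The resulting $2^{\Theta(rl)}$ clauses per per-vertex tree, raised to the $n$-th power, is where $2^{nrl}$ in the statement comes from. When you carry this out, note that the top coefficient vanishes for clauses whose feature degenerates to a constant. An example is any clause whose selector prefix excludes vertex $n$, since $\mathcal{A}^l$ returns $h_{n,l}$. Such clauses are accepted with $w=0$ and need separate treatment.

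Two further steps fail as written. First, your probes consume $\mathcal{D}$-samples. A Rademacher bound of order $\tau B/\sqrt N$ needs $N$ polynomial in $\tau B$ and in $1/\epsilon'$, with $\epsilon'=2^{-\Theta(rl)}$ if the test is to prune anything. The statement allows only $\log(\tau B)$ and no $2^{rl}$ in the sample count. In the paper's accounting, the probes run on self-generated uniform inputs and are charged to running time. $\mathcal{D}$-samples are used only for the Hoeffding estimates of the correlations $\hat v$, with a union bound over retained clauses, so $\tau B$ enters the sample count only through $\log|\mathcal{S}|$.

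Second, even granting the packing bound, enumerating every tree whose root-prefix paths were retained is in general exponential in $s$. For example, it is $q^{\Theta(s)}$ if $q$ literals survive at each node, and $s$ can be as large as $2^{r+1}$. That is not polynomial in $s^n$ and $2^{nrl}$. The paper does not enumerate. It runs a Mehta-style tree-building dynamic program over states $(S_1,\ldots,S_n,s'_1,\ldots,s'_n)$, scored by the correlation objective $\mathrm{val}$, and this is where the $s^n$ factor comes from. Your Step~3 ERM is fine statistically, but it cannot deliver the running time on its own.
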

\textbf{Efficiency.}
    The sample complexity analysis of the algorithm follows from a Hoeffding bound and is polynomial in the stated parameters. The time complexity should be interpreted more narrowly. The algorithm is efficient only in the restricted regime where (i) the number of GNN iterations $l$ is a constant, (ii) the depths of the true decision trees are logarithmic or otherwise small, and (iii) the number of vertices $n$ is fixed. Thus the theorem is a complexity-theoretic positive result for a structured setting, not a claim of practical scalability for arbitrary graph sizes or combinatorial instances. This restriction is nevertheless meaningful in comparison with a naive reduction to ordinary decision-tree distillation: unrolling the local-iteration computation and then applying \Cref{thm:dt_distillation} would lead to a dependence exponential in the depth of the unrolled tree, on the order of $2^{r^l}$ in the worst case. Our algorithm instead uses the local iterative structure directly. Removing the fixed-$n$ dependence, or reducing it under symmetry, parameter sharing, or equivariance constraints, is an important open direction; such restrictions are also motivated by evidence that learning under invariances can yield exponential savings \citep{kiani2024hardness}.

\textbf{Improvement over~\citep{boixadsera2024theorymodeldistillation}.}
    Comparatively, a naive way to use the decision-tree distillation algorithm of \citet{boixadsera2024theorymodeldistillation} in our setting is to first unroll the entire $l$-step local-iteration algorithm into a single ordinary decision tree, and then apply their algorithm to this flattened tree. Indeed, if the inner decision tree has depth $r$, each rounds of unrolling adds $r$ depths per node to the unrolled tree. Thus the fully unrolled computation  have depth $O(r^l)$ and their algorithm naively runs in time $2^{O(r^\ell)}$, which is not polynomial even with our restricted setting. We exploit the iterative structure directly in the proof to circumvent this issue. 

In the remainder of this section, we describe a GNN distillation that admits the complexity guarantees in Theorem~\ref{thm:efficient_distillation}. A full proof of the theorem can be found in
Appendix~\ref{app:proof_of_efficient_distillation}.

\paragraph{GNN distillation algorithm}
    Algorithm~\ref{alg:distillation}, operates in two phases. It first builds a set of paths (conjunctions) that is a superset of all root-prefix paths in the true tree $T$, with high probability. This is done using LRH-type assumptions to linearly probe which paths would be likely to appear in the true decision tree. Then it stitches these paths together efficiently using a modification of a classical tree building DP~\citep{mehta2002dp}.

\begin{figure}[t]
    \centering
    \begin{minipage}[t]{0.39\linewidth}
        \centering
        \includegraphics[width=\linewidth]{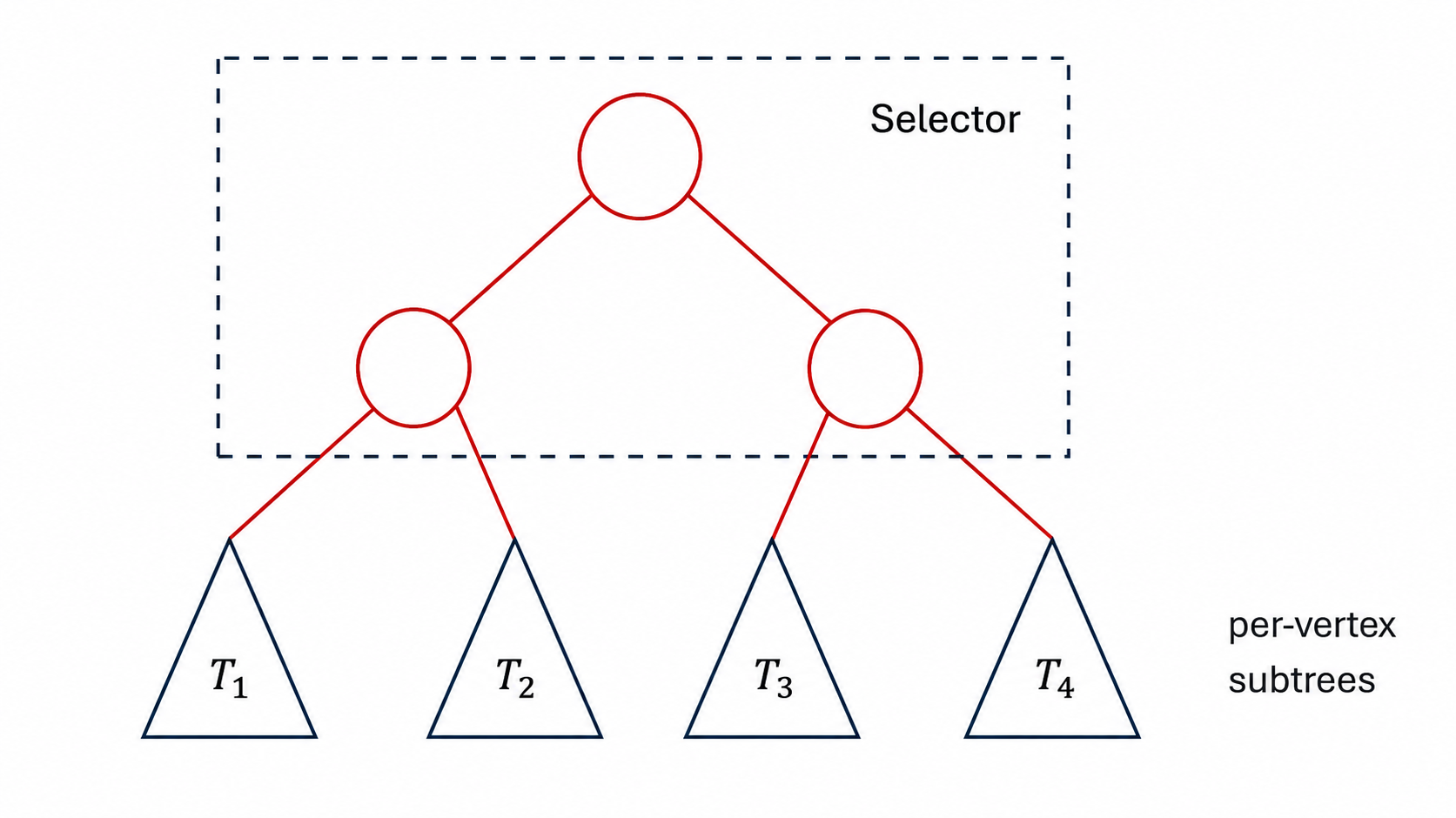}
        \caption{The global decision tree $T$ can be decomposed into a vertex {Selector}, followed by per-vertex subtrees at its leaves. This generalizes traditional GNNs by allowing for different aggregation schemes based on node statistics or id.}
        \label{fig:vertex_selector}
    \end{minipage}
    \hfill
    \begin{minipage}[t]{0.57\linewidth}
        \centering
        \includegraphics[width=\linewidth]{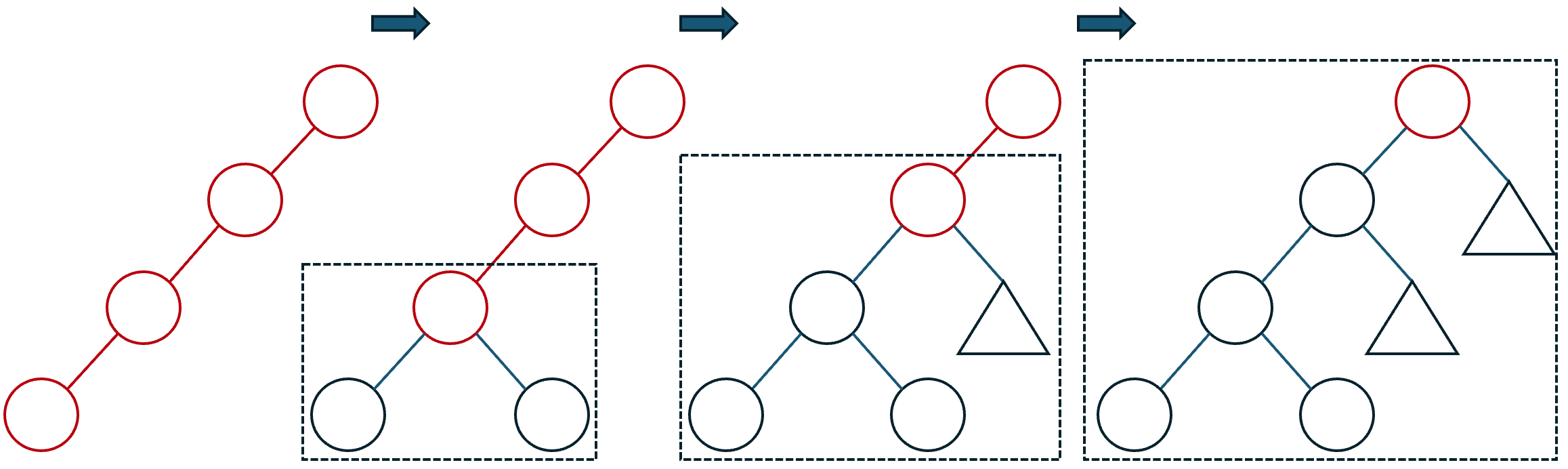}
        \caption{The dynamic program that optimizes for the inner decision in the second phase of \Cref{alg:distillation}. Circles represent decision-tree nodes while triangles represent subtrees. The base cases are root-prefix path candidates collected in the first phase; the DP then optimizes subtrees of increasing sizes rooted at the ends of candidate root-prefix paths.}
        \label{fig:tree_building_dp}
    \end{minipage}
\end{figure}

\begin{algorithm}
\caption{GNN distillation algorithm}
\label{alg:distillation}
\textbf{Input} Neural network $\nu$, random samples from $\mathcal{D}$, depth bound $R \in \mathbb{N}$, error parameters $\epsilon, \delta > 0$\\
\textbf{Output} A GNN that computes $\mathcal{A}^l[\widehat{T}]$

\begin{algorithmic}[1]
\State \textit{\slash *  Phase 1: Collecting root-prefix paths *\slash}
\State $\mathcal{S}_0 \gets \emptyset$
\For{$u = 1$ to $n$}
  \State $\mathcal{S}_0 \gets \mathcal{S}_0 \cup \{p\mid p \text{ is a root-prefix path of } \mathrm{Selector} \text{ leading to } u\}$
\EndFor

\For{$i = 1$ to $R$}
  \State 
    $\mathcal{P}_{i-1} \gets
  \Bigg\{\, S \in \mathcal{S}_{i-1} \Big\vert
  \textsc{LinearProbe}\!\Bigg(
    \mathcal{A}^l\!\left[\bigwedge_{p \in S} p\right],
    \varphi, B, \tau, 2^{-il-3}, \frac{\delta}{2|\mathcal{S}_{i-1}|R}
  \Bigg)
  =\textup{true}
  \Bigg\}$

  \State $\mathcal{S}_i \gets
  \displaystyle \bigcup_{S \in \mathcal{P}_{i-1}}
  \bigcup_{j=1}^d
  \left\{ S \cup \{x_j\},\; S \cup \{\neg x_j\} \right\}$
\EndFor

\State $\mathcal{S} \gets \displaystyle \bigcup_{j=0}^R \mathcal{S}_j$

\vspace{2em}
\State \textit{\slash * Phase 2: Estimate the true tree using paths from $\mathcal{S}$ *\slash}
\State $\mathcal{S} := \bigsqcup_{u = 0}^n \mathcal{S}^u$ decomposing paths by the per-vertex subtree $T_u$ identified by their selector prefix
\ForAll{$S_1 \in \mathcal{S}^1, \ldots, S_n \in \mathcal{S}^n$}
  \State $\hat{v}_{S_1, \ldots, S_n} \gets
  \mathbb{E}_{y \sim \mathcal{D}}\!\left[
    \left( \prod_{u = 1}^n \prod_{p \in S_u} p(y)\right)
    (2\nu(y)-1)
  \right] \pm \epsilon/|\mathcal{S}|$
\EndFor

\State \textbf{return}  $\arg\max_{\widetilde{T}_1, \ldots, \widetilde{T}_n} \text{val}(\widetilde{T}_1, \ldots, \widetilde{T}_n,\hat{v})$, where $\widetilde{T}_i$ ranges over decision trees with
$
Z'_{\widetilde{T}_i} \subseteq \left\{ \bigwedge_{p \in S} p \ \middle|\ S \in \mathcal{S}^i \right\}.
$

\end{algorithmic}
\end{algorithm}

The algorithm has two key subroutines. The $\textsc{LinearProbe}(q, \varphi, B, \tau, \epsilon, \delta, \mathcal{D})$ subroutine comes from Lemma 3.7 of \citep{boixadsera2024theorymodeldistillation}. It runs in polynomial time and draws polynomially many samples. Intuitively, it tests whether the candidate Boolean function $q$ is easy to read out linearly from the source representation $\varphi$. It returns \textup{true} w.p. $\geq 1 - \delta$ if there is a $w \in \mathbb{R}^m$ with $\|w\| \leq \tau$ and $\E_x[(\langle w, \varphi \rangle - q)^2] \leq 2\epsilon$, and returns \textup{false} w.p. $\geq 1 - \delta$ if for all such $w$, the expectation is at least $2 \epsilon$. Maximization over decision trees in the final step is done with a DP modified from \citep{guijarro1999dp,mehta2002dp}. After Phase 1, we know a pool of plausible root-prefix path fragments, but not how to connect them into a tree. This DP stitches the fragments into the tree that optimizes the objective $\textup{val}$; the objective is chosen so that maximizing it corresponds to minimizing the $0$-$1$ risk of the candidate local-iteration algorithm. In details:

\textbf{Phase 1.} We start by initializing the collection $\mathcal{S}$ with root-prefix paths of the vertex selector tree $\mathrm{selector}$ of size $n$ and depth $\log n$ (as discussed in \Cref{subsec:T}). The loop in Lines 2--4 adds the selector prefixes that identify which per-vertex subtree $T_u$ is used. Since the selector decides which of the $n$ per-vertex subtrees $T_1,\ldots,T_n$ processes the input, each such prefix is a root-prefix path of the global true tree $T$. After this initialization, we recursively grow each surviving path by one valid literal at a time. For a candidate path $S$, the $\bigwedge_{p \in S}p$ computes  the conjunction of all literals in $S$. If the corresponding local-iteration feature $\mathcal{A}^l[\bigwedge_{p\in S}p]$ cannot be linearly represented by the source network with a low-norm coefficient vector, as checked by \textsc{LinearProbe}, then the path is pruned. The key technical analysis is to prove both sides of this filtering step: all true root-prefix paths pass with high probability, while not too many spurious paths can pass. The latter is a packing argument showing that the source representation cannot simultaneously contain too many low-norm linear readouts for unrelated candidate Boolean functions.

\textbf{Phase 2.} We set up a DP algorithm that builds up our estimated inner decision tree. This step is more involved than that of \citep{boixadsera2024theorymodeldistillation}, because the same input to $\mathcal{A}^l[\widehat{T}]$ can traverse the global tree $\widehat{T}$ along different paths depending on which vertex $u$ is being updated. 
To resolve this issue, we build each of the $n$ per-vertex subtrees $T_1, \ldots, T_n$ simultaneously. 
Finally, before setting up the DP objective, recall that each root-prefix path $S$ of size more than $\log n$ contains a $\log n$-length selector prefix that identifies which subtree $T_u$ the path comes from. Thus, Line 11 decomposes $\mathcal{S} := \bigsqcup_{u = 0}^n \mathcal{S}^u$, where $\mathcal{S}^0$ contains the selector-only paths and $\mathcal{S}^u$ contains paths from per-vertex subtree $T_u$. Lines 12--14 then estimate local correlation statistics $\hat v_{S_1,\ldots,S_n}$ for tuples of retained paths, one from each candidate set. The tree-building DP uses these statistics to jointly infer all per-vertex trees.

\textbf{Valuation function.} 
Now we can set up the DP objective. For a candidate tree $\widetilde{T}$  of depth $r$ and size $s$ with per-vertex subtrees $\widetilde{T}_i$ for each $i \in [n]$ and for any weight function $u: \bigotimes_i \mathcal{S}_i \to \mathbb{R}$, define the valuation function:
\begin{equation}
    \text{val}(\widetilde{T}_1, \ldots, \widetilde{T}_n, u) := \sum_{S_i \in \text{Leaves}(T_i), i \in [n]} (2\widetilde{T}(S_1,\ldots,S_n) -1) u_{S_1,\ldots,S_n},
\end{equation}
where $\widetilde{T}(S_1,\ldots,S_n)$ is computes $\mathcal{A}[\widetilde{T}]$ after replacing $\widetilde{T}_i$ with $S_i$ only in the first layer.  
Using $u := v$--the exact expectation in Line 15, we claim to get:
$
    \text{val}(\widetilde{T}_1, \ldots, \widetilde{T}_n, v) = \mathbb{E}_{x}\left[(2\nu(x) - 1)(2\widetilde{T}(x) - 1)\right]
$, which negatively correlates with the $0$-$1$ loss for the estimated trees. In \Cref{alg:distillation}, one uses a Hoeffding bound to obtain $\hat{v}$ as an empirical approximation to $v$.
We can now describe the DP that estimates our tree. The states of the DP are indexed by tuples $(S'_1, \ldots, S'_n, s'_1, \ldots, s'_n) \in \mathcal{S}^n \times [s]^n$ where $s$ is the size upper bound for the true decision tree. At state $(S'_i,s'_i)$, the transition computes the optimal subtree of size $s'_i$ rooted at the end of path $S'_i$ for each $i \in [n]$. The tree that optimizes $0$-$1$ loss can then be queried at the final DP state. A simplification of this procedure is shown in \Cref{fig:tree_building_dp}.


\section{Experiments}\label{subsec:experiments}

We fix the number of vertices to be $n=6$ and the number of message passing rounds to be $l = 6$. For each depth $r \in \{2,3,4,5\}$, we generate an independent uniformly random decision tree $T_u$ of depth $r$ for each vertex $u$. These per-vertex trees, together with a fixed vertex selector, define the ground truth local-iteration algorithm $\mathcal{A}^l[T_1,\ldots,T_n]$. We use a 5-layer ResNet with width $1000$ as our large unstructured source model and train it on uniformly sampled Boolean graph inputs under classification loss using stochastic gradient descent. We choose these dimensions so that the largest task in the experiment, depth-$5$ per-vertex trees, can be fit by the source model. 


\textbf{Validating the LRH.} After training the source model, we collect all hidden activations as the representation $\varphi$. For every root-prefix conjunction $b$ appearing in the true per-vertex trees, we train a linear probe $w$ so that $\langle w,\varphi\rangle$ approximates $\mathcal{A}^l[b]$. The probe is trained for $100$ iterations with Adam on squared loss using uniformly sampled graph inputs. To test the low-norm form of the assumption, we also run projected gradient descent with a prescribed norm bound on $w$. \Cref{tab:lrh_evidence} reports the resulting train and test errors. Low error under the norm constraint is evidence that the trained source model satisfies the local-iteration alignment condition used by 

\textbf{End-to-end distillation.} We next evaluate the full algorithmic pipeline by implementing a heuristics of \Cref{alg:distillation}. For each depth $r \in \{2,3,4,5\}$ and each probe number $k \in \{10,50,100,200\}$, Phase 1 starts from the vertex-selector prefixes and probes candidate clauses of the form $\mathcal{A}^l[\wedge_{p\in S}p]$. We use a practical top-$k$ probe search: at each depth, we keep the $k$ clauses with lowest validation probe error and branch only from those clauses. Phase 2 then ranks the collected clauses by probe error, keeps a bounded number per vertex, and runs the tree-building DP over this pruned set. Because explicitly materializing $\hat v_{S_1,\ldots,S_n}$ has size $\prod_u |\mathcal{S}^u|$, which is already prohibitive for $n=6$, the implementation estimates the phase-2 objective by Monte Carlo agreement with the source network. We report the accuracy of the pipeline in \Cref{tab:algo2_sweep} and some more detailed diagnostics in the Appendix (\Cref{tab:algo2_phase2_diagnostics}). 

\begin{table*}[t]
    \centering
    \begin{minipage}[t]{0.48\linewidth}
        \centering
        \resizebox{\linewidth}{!}{%
        \begin{tabular}{ccccc}
     \begin{tabular}{c} depth  \\ (norm) \end{tabular}& \begin{tabular}{c} \# of \\ conj. \end{tabular} & source acc. &  \begin{tabular}{c} average \\ training err \end{tabular} &  \begin{tabular}{c} average \\ testing err \end{tabular}\\ 
    \hline
    2 ($\infty$) & 42 & $1.000$ & $0.1982$ & $0.2058$ \\
    2 ($0.001$) & 42 & $1.000$ & $1.898$ & $1.935$ \\
    \hline
    3 ($\infty$)& 90 & $1.000$ & $0.3155$ & $0.3298$ \\
    3 ($0.001$)& 90 & $1.000$ & $1.258$ & $1.270$ \\
    \hline
    4 ($\infty$)& 186 & $0.885$ & $0.0875$ & $0.0971$ \\
    4 ($0.001$)& 186 & $0.885$ & $0.4680$ & $0.4690$ \\
    \hline
    5 ($\infty$)& 378 & $0.784$ & $0.0526$ & $0.0602$ \\
    5 ($0.001$)& 378 & $0.784$ & $0.2164$ & $0.2177$ \\
    \end{tabular}%
        }
        \caption{Linearly probing $\mathcal{A}^l[b]$ where $b$ is a single conjunction in the true tree. First column includes the true depth, and a norm upper bound for projected descent.}
        \label{tab:lrh_evidence}
    \end{minipage}
    \hfill
    \begin{minipage}[t]{0.48\linewidth}
        \centering
        \resizebox{\linewidth}{!}{%
        \begin{tabular}{cccccc}
    depth & $k$ & source acc. & distil. acc. & \# probes & probe frac.\\
    \hline
    2 & 10 & $1.000$ & $0.754$ & $360$ & $0.208$\\
    2 & 50 & $1.000$ & $0.745$ & $1041$ & $0.600$\\
    2 & 100 & $1.000$ & $0.736$ & $1508$ & $0.870$\\
    2 & 200 & $1.000$ & $0.757$ & $1734$ & $1.000$\\
    \hline
    3 & 10 & $1.000$ & $0.909$ & $557$ & $0.045$\\
    3 & 50 & $1.000$ & $0.898$ & $2002$ & $0.163$\\
    3 & 100 & $1.000$ & $0.911$ & $3355$ & $0.273$\\
    3 & 200 & $1.000$ & $0.900$ & $5198$ & $0.423$\\
    \hline
    4 & 10 & $0.881$ & $0.652$ & $736$ & $0.012$\\
    4 & 50 & $0.881$ & $0.646$ & $2856$ & $0.048$\\
    4 & 100 & $0.881$ & $0.634$ & $5096$ & $0.085$\\
    4 & 200 & $0.881$ & $0.679$ & $8642$ & $0.144$\\
    \hline
    5 & 10 & $0.791$ & $0.667$ & $894$ & $0.004$\\
    5 & 50 & $0.791$ & $0.686$ & $3665$ & $0.017$\\
    5 & 100 & $0.791$ & $0.661$ & $6649$ & $0.031$\\
    5 & 200 & $0.791$ & $0.675$ & $11729$ & $0.055$\\
    \end{tabular}%
        }
        \caption{End-to-end Algorithm \ref{alg:distillation}. Here $k$ is the top-$k$ probe size used in Phase 1.}
        \label{tab:algo2_sweep}
    \end{minipage}
\end{table*}



\section{Conclusion and discussion}

In this paper, we extend the work of \citet{boixadsera2024theorymodeldistillation} in PAC-distillation to study the well-known  `learn first, distill later' paradigm, where a large multipurpose model is trained on a variety of tasks, then distilled to more structured models that have built-in algorithmic alignment properties, such as GNNs for learning DP algorithms. We showed that although some DP algorithms have a local transition rule representable by a small decision tree, the full DP computation cannot itself be represented by an efficient decision tree -- a case of misalignment.  On the other hand, the local iteration structure of a GNN with decision tree aggregation allows for distillation from a large, learned neural network that exhibits a certain kind of linear representability. We also propose an algorithm that is tractable when the number of GNN iterations is fixed, the depth of the inner decision tree is small, and the number of vertices in the input graphs is fixed.

Several exciting questions and future directions arise from this work. We conjecture that in a setting as general as ours (as described in \Cref{subsec:T}), an exponential lower bound in the number of vertices $n$ could be proven. One can also study more restricted settings, such as shared transition trees or equivariant parameterizations, to obtain a better dependence on $n$. 
Finally, one could analyze distillation algorithms that are popular in practice, such as reinforcement learning or teacher-student supervised learning.

\section*{Acknowledgments}
    This research was developed with funding from the Defense Advanced Research Projects Agency (DARPA) under agreement no. HR0011-25-3-0205. The views, opinions, and/or findings expressed are those of the authors and should not be interpreted as representing the official views or policies of the Department of Defense or the U.S. Government. MW acknowledges partial support from an Alfred P. Sloan Fellowship in Mathematics and the AI2050 program at Schmidt Sciences (Grant G-25-69786).

\bibliographystyle{plainnat}
\bibliography{main}

\clearpage
\appendix

\section{Extended Background}\label{app:extended_background}

\paragraph{Graph machine learning}
Graph machine learning is a testbed for graph-based inductive biases that may allow for exponential gains in learning efficiency. Informally, symmetry constraints of graph functions, in terms of vertex permutations, induce certain sparsity structures in the function space, making learning more data-efficient~\citep{bietti2021gain,elesedy2021gain,tahmasebi2023gain}. However, learning graph neural networks and other equivariant networks is still computationally hard in the worst case, requiring, for example, exponentially or superpolynomially many queries in the correlation statistical queries model of learning \citep{kiani2024hardness}. Understanding which settings exactly give rise to quantitative benefits for learning is an important and active area of research.

More specifically for graphs, a graph neural network (GNN) \citep{gilmer2017gnn,kipf2017semisupervised} is a deep-learning parameterization of the space of functions on graphs, potentially of different sizes.

\paragraph{Graph neural networks} The main architecture we consider as the target class is that of graph neural networks with strong inductive bias for graph datasets. In particular, we are interested in message passing neural network (MPNN) (\Cref{def:mpnn}), in which each node aggregates neighboring information and processes them with a neural network to form a new latent representation in each round. After a fixed number of rounds, the network outputs a learned representation for each vertex of the graph, or combines them together to form a single representation for the whole graph, depending on the specific tasks. While we will only consider decision tree aggregators, neural networks can (arguably efficiently) emulate decision trees, since they are universal approximators. 

\begin{definition}[Message-passing neural network]\label{def:mpnn}
Let $G=(V,E)$ be a graph with node features $x_v \in \mathcal X$ for $v\in V$.
An $l$-layer message-passing neural network (MPNN) consists of an initialization map
$\iota:\mathcal X\to \mathbb R^{d_0}$, message maps
$M_t:\mathbb R^{d_{t-1}}\times \mathbb R^{d_{t-1}}\to \mathbb R^{q_t}$,
update maps $U_t:\mathbb R^{d_{t-1}}\times \mathbb R^{q_t}\to \mathbb R^{d_t}$,
and a permutation-invariant aggregation operator $\mathrm{AGG}_t$ on multisets.
It computes hidden states
\[
h_v^{(0)}=\iota(x_v),\qquad
m_v^{(t)}=\mathrm{AGG}_t\bigl(\{M_t(h_v^{(t-1)},h_u^{(t-1)}):u\in \mathcal N(v)\}\bigr),
\]
\[
h_v^{(t)}=U_t(h_v^{(t-1)},m_v^{(t)}),\qquad t=1,\ldots,l.
\]
A node-level MPNN outputs $\rho_{\mathrm{node}}(h_v^{(L)})$, while a graph-level MPNN outputs
\[
\rho_{\mathrm{graph}}\bigl(\mathrm{READOUT}(\{h_v^{(L)}:v\in V\})\bigr),
\]
where $\mathrm{READOUT}$ is permutation invariant. If the message, update, and readout maps are neural networks, we call the resulting architecture an MPNN.
\end{definition}

\paragraph{Combinatorial optimization with graph ML}
One proposed area where GNNs could have strong inductive bias with the learning task is that of using neural networks to learn combinatorial optimization. It is observed \citep{xu2020algorithmicalignment} that the loop structure of an MPNN closely follows that of local graph algorithms, such as Bellman-Ford for shortest path. As such, \citet{xu2020algorithmicalignment} argues that the neural network used in the aggregation operation of an MPNN only had to learn a simple function of its inputs, and not the actual for-loop structure, thus decreasing the sample complexity of learning from supervised examples produced by such algorithms.  Although the original paper provided a theoretical justification for this phenomenon through PAC learning \citep{valiant1984pac}, a tighter analysis of what constitutes such \emph{algorithmic alignment} has drawn many follow-up investigations \citep{dudzik2022gnnsdp, dudzik2024cocycles}. Nevertheless, the idea that learning architecture should be built to resemble a potential algorithmic paradigm, such as dynamic programming, is intuitive and has been the inspiration for many neural heuristics that are widely successful in practice \citep{kahng2024steiner,nerem2025graphneuralnetworksextrapolate,he2025primaldual,gasse2019co}.

\paragraph{PAC-distillation keystone result}
To give a taste of the results that can be obtained from this framework, we restate a result from~\citet{boixadsera2024theorymodeldistillation}. Recall that in the Boolean setting, a decision tree has vertices labeled by some literals of the input bits and each vertex is only reachable by  inputs that satisfy the conjunction of literals on the path from the root to said vertex.

\begin{theorem}[Theorem 3.6 of \citep{boixadsera2024theorymodeldistillation}]\label{thm:dt_distillation}
   Let $\mathcal{F}$ be the set of neural networks $f$ that implicitly compute a decision tree $T:\{0,1\}^d \to \{0,1\}$ of depth $r$ and size $s$ such that $f$ satisfies $\tau$-LRH for features $\mathcal{Z}_T := \{\bigwedge_{p \in S} p : \text{$S$ is a path of literals from the root of $T$ to any of its vertices}\}$. Let $\mathcal{H}$ be the set of decision trees with depth $r$ and size $s$. Then for any $\epsilon, \delta \in (0,1)$, there is an algorithm that $(\eps,\delta)$-distills from $\mathcal{F}$ to $\mathcal{H}$ that runs in polynomial time in $d, m, 1/\epsilon, s, 2^r, \log(1/\delta), \tau$ and $B$ and takes polynomially many samples in $1/\epsilon, s, \log(d/\delta), \log(\tau B)$ where $B \geq \max_x \|\varphi(x)\|$. 
\end{theorem}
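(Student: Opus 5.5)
The plan is a two-phase ``grow-then-stitch'' argument. Phase~1 recovers, by linear probing of $\varphi$, a polynomial-size pool of candidate conjunctions that contains every heavy root-prefix path of the true tree $T$. Phase~2 runs a tree-building dynamic program, in the style of \citet{guijarro1999dp,mehta2002dp}, over this pool.

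\textbf{Phase 1 (candidate paths).} Start from the empty path and grow depth by depth up to $r$. Every surviving path $S$ is extended by each of the $2d$ literals $x_j,\neg x_j$. An extension is kept only if $\textsc{LinearProbe}$ accepts $q_S=\bigwedge_{p\in S}p$ with tolerance $\eta_i\approx 2^{-i}\epsilon/s$ at depth $i$. The confidence parameter is split by a union bound over all probes, which only costs $\log(d/\delta)$ in the sample complexity.

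\emph{Completeness.} If $S$ is a genuine root-prefix path, $\tau$-LRH gives $w$ with $\|w\|\le\tau$ and $\langle w,\varphi\rangle=q_S$ exactly. Hence the probe accepts with high probability, by the guarantee of \textsc{LinearProbe} (Lemma~3.7 of \citet{boixadsera2024theorymodeldistillation}). So the induction ``all true paths of depth $i$ survive'' goes through.

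\emph{Soundness / size control.} This is the step I expect to be the main obstacle. I need that at each depth only $\mathrm{poly}(\tau B/\epsilon,2^r)$ paths survive, so the frontier never blows up to $d^{r}$. The approach is a packing argument in $L^2(\mathcal{D})$:
\begin{itemize}[nosep]
\item Restrict attention to \emph{heavy} conjunctions, $\Pr[q_S=1]\ge \eta_i$. Light ones can be discarded at an $O(\epsilon)$ cost in the final error, since at most $s$ leaves of $T$ are involved.
\item Every accepted heavy $q_S$ is $\eta_i$-close to some $\langle w_S,\varphi\rangle$ with $\|w_S\|\le\tau$, i.e.\ to an element of the image of the $\tau$-ball under the operator $w\mapsto\langle w,\varphi\rangle$.
\item The trace of $\E[\varphi\varphi^\top]$ is at most $B^2$. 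So the number of pairwise nearly orthogonal functions of norm at least $\sqrt{\eta_i}$ in that image is at most $O(\tau^2B^2/\eta_i)$.
\item To get near-orthogonality, group surviving children of a common parent by disjoint-support families. The siblings $S\cup\{x_j\}$ and $S\cup\{\neg x_j\}$, and more generally the leaves of any subtree, are disjoint, hence orthogonal.
\end{itemize}
If this grouping is too weak, the fallback is to keep a greedy maximal set of pairwise far-apart accepted conjunctions. I would then argue that true paths are within $O(\eta_i)$ of a kept representative, which suffices for Phase~2.

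\textbf{Phase 2 (stitching) and error analysis.} Let $\mathcal{S}$ be the surviving pool. For each $S\in\mathcal{S}$, estimate $v_S=\E[q_S(x)(2f(x)-1)]$ to accuracy $\epsilon/(4|\mathcal{S}|)$ by Hoeffding with $O(|\mathcal{S}|^2\epsilon^{-2}\log(|\mathcal{S}|/\delta))$ samples. For a tree $\widetilde T$ whose root-prefix paths all lie in $\mathcal{S}$, set
\[
\mathrm{val}(\widetilde T,v)=\sum_{\text{leaves } L}(2\,\mathrm{label}(L)-1)\,v_L .
\]
Because the leaves partition $\{0,1\}^d$, this equals $\E[(2f-1)(2\widetilde T-1)]=1-2\,\mathrm{error}_{f,\mathcal{D}}(\widetilde T)$.

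Maximize $\mathrm{val}(\cdot,\hat v)$ by the DP whose states are pairs $(S,s')\in\mathcal{S}\times[s]$, meaning the best subtree of size $s'$ hanging below path $S$. A state either becomes a leaf with the better sign, or branches on a literal $x_j$ such that both $S\cup\{x_j\}$ and $S\cup\{\neg x_j\}$ are in $\mathcal{S}$, splitting $s'$ between the two children. This takes $\mathrm{poly}(|\mathcal{S}|,s,d)$ time.

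By completeness, $T$ itself (or its heavy pruning) is feasible. Since $f$ implicitly computes $T$, its value is at least $1-O(\epsilon)$. The estimation error on the $\le s$ leaves of any feasible tree is at most $\epsilon/4$. So the DP output has error at most $\epsilon$ with probability at least $1-\delta$.

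The running time is polynomial in $d,m,s,2^r,1/\epsilon,\tau,B,\log(1/\delta)$. The $2^r$ factor enters through the depth-dependent tolerances $\eta_i$ and the resulting pool size.
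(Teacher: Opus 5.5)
\textbf{There is a genuine gap, and it is the step you flagged.} The frontier-size bound cannot come from probing in $L^2(\mathcal{D})$. PAC-distillation must succeed for every $\mathcal{D}$, and a general $\mathcal{D}$ gives no near-orthogonality among competing candidates. Disjointness only covers siblings, or the leaves of a single tree. A frontier contains overlapping clauses such as $x_1\wedge x_2$ and $x_1\wedge x_3$, and $\mathcal{D}$ may identify them.

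For instance, take $\mathcal{D}$ uniform on $\{0^d,1^d\}$ and a tree $T$ one of whose branches tests $x_1,\dots,x_r$ positively. Every non-degenerate clause of $i$ positive literals then coincides $\mathcal{D}$-a.s.\ with the true path $(x_1,\dots,x_i)$, has mass $1/2$, and is accepted with the same readout. So at least $\binom{d}{i}$ clauses survive at depth $i$.

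The greedy fallback does not fix this, for two reasons. First, pairwise separation is much weaker than near-orthogonality: a generic packing bound for separated points near $\{\langle w,\varphi\rangle:\|w\|\le\tau\}$ is exponential in $\tau^2B^2/\gamma^2$, and you give no structural reason conjunctions would do better. Second, swapping true paths for representatives need not preserve the parent/complementary-sibling structure that your DP and the identity $\mathrm{val}=1-2\,\mathrm{error}$ require.

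Independently, your route does not meet the stated sample bound. That bound is only logarithmic in $\tau B$ and $d$ and has no $2^r$ factor. Probes under $\mathcal{D}$ at tolerance $2^{-i}\epsilon/s$ draw $\mathrm{poly}(2^r,s/\epsilon,\tau,B)$ samples from $\mathcal{D}$ (\Cref{lem:linear_probe}). Estimating each $v_S$ to accuracy $\epsilon/(4|\mathcal{S}|)$ costs a further factor $|\mathcal{S}|^2$.

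\textbf{The missing idea} is that $\tau$-LRH is exact at every $x\in\{0,1\}^d$. So Phase~1 can run \textsc{LinearProbe} under the \emph{uniform} distribution on the hypercube, on inputs the algorithm generates itself using $\varphi$. This consumes no samples from $\mathcal{D}$, and the tolerance is of order $2^{-i}$ with no $\epsilon$. Under the uniform measure every non-degenerate $i$-clause has mass exactly $2^{-i}$, so light clauses never arise.

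Fourier analysis then supplies the near-orthogonality you were after. $\textsc{AND}_S$ has a nonzero top-degree coefficient on the parity $\chi_{I(S)}$ of its variable set, and distinct parities are orthogonal. This is what drives Lemma~3.8 of \citet{boixadsera2024theorymodeldistillation} (\Cref{lem:packing} in the paper), which caps the accepted $i$-clauses at $2^{3i+4}\tau^2B^2$. Extensions should use fresh variables only, since a contradictory clause is identically $0$ and passes with $w=0$.

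$\mathcal{D}$ then enters only in Phase~2. There, accuracy of order $\epsilon/s$ for each $v_S$ suffices, because a tree has at most $s$ leaves. This gives $O\bigl((s/\epsilon)^2\log(|\mathcal{S}|/\delta)\bigr)$ samples with $\log|\mathcal{S}|=O(\log d+r+\log(\tau B))$.

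Your Phase~2 otherwise matches the argument the paper builds on: the same $\mathrm{val}$, the same leaf-partition identity, and the same DP over states $(S,s')$. Since every root-prefix path of $T$ lies in the pool, $T$ itself is feasible with value $1$, so no heavy pruning is needed.
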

\begin{remark}\label{rmk:dt_distillation_vs_learning}
This is an unexpected result, as it is unknown if PAC-learning a decision tree can take less than $d^{O(r)}$ time \citep{weisberg2020distributionfree}
. On the other hand \Cref{thm:dt_distillation} shows that PAC-distillation takes only $\textup{poly}(d, 2^r)$ time.  
\end{remark}

\section{Proof of Lemma \ref{lem:gnn_vs_dt}}
\label{app:proof_of_gnn_vs_dt}
\begin{proof}
    Consider the combinatorial problem of deciding, for a labeled graph, whether the first and last vertex is connected with a path of length at most $2$, or $2$-reachability.

    The classic dynamic programming (DP) algorithm for this problem runs in time $O(n)$ \citep{tarjan1971reachability}. 

    However, any decision tree that correctly solves this problem on all labeled graphs of size $n$ must have exponential size. To see this, we bound the number of leaves of a correct tree (which in turn bounds the order of its size since a decision tree is binary). 
    
    Consider the subset of graphs on the $n$ vertices labeled by $[n]$ where the only possible edges are $(1,n)$ and $(1, v)$, $(v,n)$ for all $v \in V \backslash\{1,n\}$. There are $2^{2(n -2)+1}$ such graphs. Among them, graphs that fail to have a path of size at most $2$ between $1$ and $n$ does not have the $(1,n)$ edge and for each other $v$, have one of the $3$ configurations out of $4$ possible choices of presence/absence of the pair $(1, v)$, $(v,n)$. This counts to $(3/4)^{n-2}/2$ fraction of the total number of graphs. 

    Now, each $0$-leaf (leaf that outputs $0$ for the DT) of a correct DT on these inputs fixes a certain presence/absence of some edges on the path from the DT's root to it. Once certain variables are fixed, all other variables are free to range between $0$ and $1$ and the output of the DT is still $0$. This means that $(1,n)$ must always be included in the fixed variables, and so is at least one in each pair $(1, v)$, $(v,n)$. Thus, each $0$-leaf accounts for at most a fraction of $2^{-(n -1)}$ of the total number of graphs.

    Therefore, the number of leaves must be at least $(3/4)^{n-2}/2 / 2^{-(n -1)}$, which is exponential in $n$.
\end{proof}

\section{Proof of \Cref{thm:efficient_distillation}}\label{app:proof_of_efficient_distillation}
\subsection{Notations and definitions}
We will set up some notation for this particular proof and also remind the readers of previously defined notation:
\begin{itemize}
    \item \textbf{The input space:} Since we are using a graph neural network to emulate an algorithm of the form $\mathcal{A}^l[T]$ for some decision tree $T$, there is a difference between the input of $\mathcal{A}^l[T]$ and that of $T$. The former takes as input an initialization feature in $\{0,1\}^n$ a graph adjacency matrix $\{0,1\}^{n \times n}$ and we write $\mathcal{X}_\mathcal{A}$ for this input space. $T$ itself has an input consisting of the previous representation of each layer $\{0,1\}^{n}$, a graph adjacency matrix $\{0,1\}^{n \times n}$ and additionally the index of a vertex $v$ and we reserve $\mathcal{X} := \{0,1\}^d$ where $d = \Omega(n^2)$. 
    \item \textbf{Logical notation:} for some input bit $x_j$, $j \in [d]$, a \textit{literal} is $x_j$ or its negation $\neg x_j$. A \textit{clause} $S = (p_1, \ldots, p_s)$ is an ordered tuple of $s$ literals and we define $\textsc{AND}_S(x) := \bigwedge_{p \in S} p$ the conjunction of literals in $S$. A \textit{non-degenerate} $k$-clause is a clause $S$ such that $|S| = k$ and each variable appears at most once in $S$. 
    \item \textbf{Decision trees:} Given a decision tree $T$, recall that $Z'_T$ is the set of clauses each corresponds to a path in $T$ with one end-point being the root (i.e. a \textit{root-prefix path}). We include the trivial path $\emptyset$ with $\textsc{AND}_\emptyset = \textsc{true}$ in this collection. We also denote by $Z_T$ the collection of all $\mathcal{A}^l[\textsc{AND}_S]$ functions for each $S \in Z'_T$.
\end{itemize}

In our algorithm, which is an extension of that in \citep{boixadsera2024theorymodeldistillation}, we use some subroutines from the original paper. 
\begin{lemma}[Lemma 3.7 \citep{boixadsera2024theorymodeldistillation}]\label{lem:linear_probe}
    Given a function $g: \mathcal{X} \to [-1,1]$, a representation map $\varphi: \mathcal{X} \to \mathbb{R}^m$ with norm bounded by $B \geq \max_x \|\varphi(x)\|$, $\tau, \epsilon, \delta > 0$ and an input distribution $\mathcal{D}$, there is a subroutine $\textsc{LinearProbe}(g,\varphi,B,\tau,\epsilon,\delta, \mathcal{D})$ that runs in time $\text{poly}(1/\epsilon, \log(1/\delta), \tau, B, m)$ and draws $\text{poly}(1/\epsilon, \log(1/\delta), \tau, B)$ samples from $\mathcal{D}$ such that:
    \begin{itemize}
        \item If there is a $w \in \mathbb{R}^m$ with $\|w\| \leq \tau$ and $\E[(w\cdot \varphi(x) - g(x))^2] \leq \epsilon$, then \textsc{LinearProbe} returns true with probability $1-\delta$.
        \item If there is no $w \in \mathbb{R}^m$ with $\|w\| \leq \tau$ and $\E[(w\cdot \varphi(x) - g(x))^2] \leq \epsilon$, then \textsc{LinearProbe} returns false with probability $1-\delta$.
    \end{itemize}
\end{lemma}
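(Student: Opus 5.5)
The plan is to implement \textsc{LinearProbe} as constrained empirical risk minimization followed by a threshold test, and to control the gap between empirical and population risk by uniform convergence over the norm ball. Write $L(w) := \E_{x\sim\mathcal{D}}[(w\cdot\varphi(x)-g(x))^2]$ and $L^\ast := \min_{\|w\|\le\tau} L(w)$. The minimum is attained because the ball is compact and $L$ is continuous. The procedure has three steps:
\begin{enumerate}
\item Draw $N$ i.i.d.\ samples $x_1,\dots,x_N\sim\mathcal{D}$.
\item Solve the convex program $\widehat{L}^\ast := \min_{\|w\|\le\tau}\frac1N\sum_i (w\cdot\varphi(x_i)-g(x_i))^2$ to additive accuracy $\eta/4$. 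This can be done by projected gradient descent or by an interior-point method on an $m$-dimensional quadratic with a single ball constraint, in time $\mathrm{poly}(m,\tau,B,1/\eta)$.
\item Return true iff the computed value is at most a threshold $\theta$.
\end{enumerate}

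\textbf{Concentration.} For $\|w\|\le\tau$, $\|\varphi\|\le B$ and $|g|\le1$, the loss $(w\cdot\varphi(x)-g(x))^2$ lies in $[0,(\tau B+1)^2]$. It is also $2(\tau B+1)B$-Lipschitz in $w$. A Rademacher bound for the linear class $\{x\mapsto w\cdot\varphi(x)\}$, which has complexity $\tau B/\sqrt N$, combined with the contraction lemma and McDiarmid's inequality, gives
\[
\sup_{\|w\|\le\tau}|\widehat L(w)-L(w)|\le \eta/4
\]
with probability $1-\delta$. This holds once $N=\mathrm{poly}(\tau,B,1/\eta,\log(1/\delta))$, independently of $m$, which matches the stated sample bound. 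On this event, the computed value lies within $\eta/2$ of $L^\ast$.

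\textbf{Threshold and the two bullets.} Take $\theta=\epsilon+\eta/2$ with $\eta$ a small polynomial fraction of $\epsilon$.
\begin{itemize}
\item If $L^\ast\le\epsilon$ (first bullet), the computed value is at most $\epsilon+\eta/2$, so the probe returns true.
\item If $L^\ast>\epsilon+\eta$, the computed value exceeds $\epsilon+\eta/2$, so the probe returns false.
\end{itemize}
The remaining case is the second bullet exactly as worded: there is no $w$ with $\|w\|\le\tau$ and $L(w)\le\epsilon$, that is, $L^\ast>\epsilon$ strictly, but $L^\ast$ may lie in the thin band $(\epsilon,\epsilon+\eta]$.

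\textbf{Main obstacle.} This band is the hard step. A sample-based procedure cannot, in general, distinguish $L^\ast=\epsilon$ from $L^\ast=\epsilon+\eta$ with $\mathrm{poly}(1/\epsilon)$ samples for arbitrary $\mathcal{D}$. So the exact, gap-free second bullet cannot follow from concentration alone, and I would not try to force it that way.

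What I would try first is to exploit the structure of the inputs. Since $\mathcal{X}=\{0,1\}^d$ is finite and $g$ is Boolean-valued in every call from \Cref{alg:distillation}, $L^\ast$ is the optimum of a quadratic program whose data are the point masses of $\mathcal{D}$. I would try to show that, for the thresholds $2^{-il-3}$ used in \Cref{alg:distillation}, the quantity $L^\ast$ never falls inside $(\epsilon,\epsilon+\eta]$ for the candidate conjunctions actually probed. Then the stated second bullet holds on every input on which the subroutine is invoked.

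If that margin argument fails, the honest fallback is to prove the lemma in the form above: accept whenever $L^\ast\le\epsilon$, and reject whenever $L^\ast>\epsilon+\eta$ for an arbitrarily small polynomial $\eta$. I would then check that the downstream analysis of \Cref{thm:efficient_distillation} only uses the second bullet with such a slack. This holds if the threshold in the pruning step is separated from $2^{-il-3}$ by a constant factor, which is how the paper's informal description in the main text phrases the test, with threshold $2\epsilon$.
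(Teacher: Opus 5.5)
Your fallback argument is correct, and it is the standard proof of this lemma. The paper imports the lemma from \citet{boixadsera2024theorymodeldistillation} without proof. The argument is: minimize the empirical squared loss over the $\tau$-ball (a convex problem), apply the dimension-free uniform convergence bound for norm-bounded linear predictors with loss range $(\tau B+1)^2$, and threshold the computed optimum. Your diagnosis of the statement is also right. As far as I recall, the original Lemma 3.7 has a factor-two gap: its second bullet reads ``no $w$ with $\|w\|\le\tau$ and $\E[(w\cdot\varphi(x)-g(x))^2]\le 2\epsilon$.'' The restatement here drops that gap, and your version with $\eta=\epsilon$ recovers the original exactly. The gapped form is all the paper actually uses. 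True root-prefix features have $L^\ast=0$ and are kept by the first bullet. A clause accepted at threshold $2^{-il-3}$ in \Cref{alg:distillation} is certified, after a union bound, to have $L^\ast\le 2^{-il-2}$, which is the form of hypothesis required by \Cref{lem:packing2}. This mirrors the original pairing of probe threshold $2^{-k-3}$ with the $2^{-k-2}$ hypothesis of \Cref{lem:packing}. Note that the main-text paraphrase, with ``$\le 2\epsilon$'' versus ``at least $2\epsilon$,'' is also gap-free, so it does not support the slack you need.

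Drop the ``margin'' idea you planned to try first; it cannot work. You can sharpen your ``main obstacle'' into an outright counterexample: for every $\delta<1/2$ the gap-free statement is false, whatever the number of samples. Because $\mathcal{X}$ is finite, the acceptance probability of any procedure drawing $N$ samples is a polynomial in the point masses of $\mathcal{D}$, and hence continuous in $\mathcal{D}$. Take $\varphi\equiv 0$ and two points $x_0,x_1$ with $g(x_0)=1$ and $g(x_1)=0$. Let $\mathcal{D}_t$ put mass $t$ on $x_0$ and $1-t$ on $x_1$. Then $L^\ast(\mathcal{D}_t)=t$. The two bullets as written would force acceptance probability at least $1-\delta$ at $t=\epsilon$ and at most $\delta$ for every $t>\epsilon$, which contradicts continuity. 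The lemma is quantified over all $g$, $\varphi$ and $\mathcal{D}$, so a property of the particular conjunctions probed in \Cref{alg:distillation} cannot rescue it. Inside the algorithm $\mathcal{D}$ is still arbitrary, so a spurious clause's $L^\ast$ has no reason to avoid the band $(\epsilon,\epsilon+\eta]$. State the lemma with the slack and prove it as in your fallback.
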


\subsection{Proof of \Cref{thm:efficient_distillation}}
We are now ready to start the proof. Assume that there is a true decision tree $T$. We first show that the paths collection from the distillation algorithm contains all root prefix paths in the true tree: $\mathcal{S} \supseteq Z_T$ with high probability.

\paragraph{$\mathcal{S}$ contains all root-prefix paths of the true tree} From the guarantees of \Cref{lem:linear_probe}, it suffices to show that any clause $S \in Z'_T$ is checked by \textsc{LinearProbe} and thus added to $\mathcal{S}$ with high probability. Assume to the contrary that this is not true; in other words, there is a root-prefix path $S \in Z'_T$ of length $i$ that was not added to $\mathcal{S}_i$. Recall that whenever \textsc{LinearProbe} accepted a clause $S'$, we added all possible extension of $S'$ to our collection $\mathcal{S}$. The fact that $S$ was not included means that either it was not checked by \textsc{LinearProbe} or it was checked and then rejected. In the former case, this means that $S \neq \emptyset$ (since the $\emptyset$ is always checked) and the root-prefix path corresponds to $S$' parent was not included in the previous set $\mathcal{S}_{i - 1}$. We can then use induction on this parent node instead. In the latter case, $S$ was checked but \textsc{LinearProbe} returns false, which occurs with probability $1 - \frac{\delta}{2|\mathcal{S}_i|R}$ because of our linear representation hypothesis and \Cref{lem:linear_probe}.

A union bound at each layer suffices to argue that all length $i$ clauses in $Z'_T$ are in $\mathcal{S}$ with probability $1 - \frac{\delta}{2R}$ for each $i$. Another union bound over all $i$ then concludes that $Z'_T \in \mathcal{S}$ with probability $1 - \delta/2$. 

Now we argue that $|S_i| \leq \text{poly}(2^{\Theta(il)}, \tau, B, d)$
\paragraph{$\mathcal{S}_i$ size is upper-bounded} The key way we control $\mathcal{S}_i$ size is by arguing that only the $\mathcal{A}^l[\textsc{AND}_S]$'s that can be linearly represented by the source network are kept while the rest are pruned. Furthermore, there cannot be too many of these $\mathcal{A}^l[\textsc{AND}_S]$ kept, at the same time. A naïve approach uses the following lemma from \citep{boixadsera2024theorymodeldistillation}. 

\begin{lemma}[Lemma 3.8 \citep{boixadsera2024theorymodeldistillation}]\label{lem:packing}
    Let $\mathcal{S}$ be a collection of non-degenerate $k$-clauses. Let $\mathcal{G} = \{\textsc{AND}_S \mid S \in \mathcal{S}\}$. If $\varphi$ approximately satisfies $\tau$-LRH w.r.t. $\mathcal{G}$: 
    \begin{equation}
        \forall g \in \mathcal{G}, \exists w \in \mathbb{R}^m, \|w\|\leq \tau \text{ and } \E_{x \sim \text{U}[\{0,1\}^d]} [(w \cdot \varphi(x) - g(x))^2] \leq 2^{-k-2}, 
    \end{equation}
    then $|\mathcal{S}| \leq 2^{3k + 4} \tau^2 \E_x \|\varphi(x)\|^2$ 
\end{lemma}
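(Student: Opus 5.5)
The statement to prove is \Cref{lem:packing}: at most $2^{3k+4}\tau^2\E_x\|\varphi(x)\|^2$ non-degenerate $k$-clauses can have low-norm approximate linear readouts. I would use a Gram-matrix / near-orthogonality packing argument in function space.

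First, center the features. For a non-degenerate $k$-clause $S$ under the uniform distribution, $\Pr[\textsc{AND}_S(x)=1]=2^{-k}$. For two distinct non-degenerate $k$-clauses $S\neq S'$, the correlation $\E[\textsc{AND}_S\textsc{AND}_{S'}]$ is either $0$ or $2^{-|S\cup S'|}\le 2^{-k-1}$. The zero case occurs when the clauses contain conflicting literals. Otherwise the union contains at least $k+1$ distinct literals.

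Define $g_S := \textsc{AND}_S$ and $\hat g_S := w_S\cdot\varphi$ with $\|w_S\|\le\tau$ and $\|\hat g_S-g_S\|_{L^2}^2\le 2^{-k-2}$. The target functions have $\|g_S\|^2=2^{-k}$ and pairwise inner products at most $2^{-k-1}$, i.e.\ at most half their squared norm.

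Second, I would lower bound the squared norm of the sum $F:=\sum_{S}\hat g_S$ in a controlled way. The cleanest route is to bound $\sum_S \|\hat g_S\|^2$-type quantities via the feature covariance $\Sigma=\E[\varphi\varphi^\top]$. For each $S$, compute
\[
\langle \hat g_S,g_S\rangle \ge \|g_S\|^2-\|g_S\|\,\|\hat g_S-g_S\| \ge 2^{-k}-2^{-k/2}2^{-k/2-1}=2^{-k-1}.
\]
Now consider $\E[\varphi(x)\,g_S(x)]=:c_S\in\mathbb{R}^m$. Then $\langle \hat g_S,g_S\rangle=w_S\cdot c_S\le \tau\|c_S\|$, so $\|c_S\|\ge 2^{-k-1}/\tau$. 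The vectors $c_S$ satisfy $\sum_S\|c_S\|^2\le \E\|\varphi\|^2\cdot\lambda_{\max}(K)$, where $K_{SS'}=\E[g_Sg_{S'}]$ is the Gram matrix of the $g_S$. This follows because $c_S$ is the image of $g_S$ under the operator $g\mapsto\E[\varphi g]$, whose Hilbert–Schmidt norm squared is $\E\|\varphi\|^2$.

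Third, bound $\lambda_{\max}(K)$. This is the main obstacle, since off-diagonal entries of size $2^{-k-1}$ over $|\mathcal S|$ clauses could naively give a large eigenvalue. I would try instead to bound $\lambda_{\max}(K)=\sup_{\|a\|=1}\E[(\sum_S a_S g_S)^2]$ using the fact that $\sum_S g_S(x)\le$ (number of clauses satisfied by $x$). A cruder route may suffice: by Cauchy–Schwarz, $\lambda_{\max}\le\max_S\sum_{S'}|K_{SS'}|$. If this is too weak, I would switch to a direct Gram argument on the probe vectors $w_S$ restricted to centered versions $g_S-2^{-k}$.

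Combining the steps gives $|\mathcal S|\,2^{-2k-2}\tau^{-2}\le \E\|\varphi\|^2\lambda_{\max}(K)$. This yields the claimed $2^{3k+4}$ bound provided $\lambda_{\max}(K)\lesssim 2^{-k+2}$. Establishing this eigenvalue bound, possibly after the centering trick, is the step I expect to require the real work.
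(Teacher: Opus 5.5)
Your first two steps are correct: $\langle \hat g_S,g_S\rangle\ge 2^{-k-1}$, hence $\|c_S\|\ge 2^{-k-1}/\tau$, and $\sum_S\|c_S\|^2\le \E\|\varphi\|^2\,\lambda_{\max}(K)$ is a valid Hilbert--Schmidt bound. The argument fails at the third step, because $\lambda_{\max}(K)$ has no bound depending on $k$ alone. Every $\textsc{AND}_S$ has mean $2^{-k}$, so $K=2^{-2k}\mathbf{1}\mathbf{1}^\top+\widetilde K$, where $\widetilde K\succeq 0$ is the Gram matrix of the centered functions. Hence $\lambda_{\max}(K)\ge |\mathcal S|\,2^{-2k}$, and your final inequality then has $|\mathcal S|$ to the first power on both sides and yields nothing. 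For example, for the family of all $k$-clauses, $\lambda_{\max}(K)\ge 2^{-k}\binom{d}{k}$. Centering by the mean does not fix this. Take clauses that all contain the literal $x_1$ and are otherwise on disjoint variables: their centered inner products are $2^{-(2k-1)}-2^{-2k}=2^{-2k}$, so again $\lambda_{\max}(\widetilde K)\ge(|\mathcal S|-1)2^{-2k}$. Your row-sum fallback $\lambda_{\max}\le\max_S\sum_{S'}|K_{SS'}|$ is no better. The obstruction is structural: correlating against all of $g_S$ mostly measures low-degree Fourier mass, and that mass is shared by many clauses. (Also, to reach $2^{3k+4}$ from your inequality you would need $\lambda_{\max}(K)\le 2^{k+2}$, not $2^{-k+2}$.)

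The missing idea, which the paper's Fourier-analytic sketch uses following the cited work, is to keep only the top Fourier level. The degree-$k$ part of $\textsc{AND}_S$ is $\sigma_S 2^{-k}\chi_{I(S)}$ with $\sigma_S\in\{\pm1\}$, and these are exactly orthogonal for distinct variable sets. The paper projects $W\Phi$ onto the degree-$k$ characters, lower-bounds $W\Phi\Phi^\top W^\top$, and compares determinants using the bound quoted as Claim B.4.

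Your Hilbert--Schmidt framework also works after this substitution. Set $c_S:=\E[\varphi\,\sigma_S\chi_{I(S)}]$. The Gram matrix of the test functions $\sigma_S\chi_{I(S)}$ is block diagonal, with rank-one blocks of size at most $2^k$ (one block per variable set), so its top eigenvalue is at most $2^k$. Moreover, $w_S\cdot c_S$ is $\sigma_S$ times the Fourier coefficient of $w_S\cdot\varphi$ at $I(S)$, which equals $2^{-k}$ under exact representation. This gives $|\mathcal S|\le 2^{3k}\tau^2\E\|\varphi\|^2$. It also explains the $2^{3k}=2^{2k}\cdot 2^k$, and it handles repeated variable sets, which the paper's claimed identity $W\Phi PP^\top\Phi^\top W^\top=2^{d-\Theta(kl)}I$ implicitly excludes.

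With approximation error, however, that coefficient is only guaranteed to be at least $2^{-k}-\|w_S\cdot\varphi-\textsc{AND}_S\|_{L^2}$. So you need squared error at most $2^{-2k-2}$, which yields the bound $2^{3k+2}\tau^2\E\|\varphi\|^2$. With the threshold $2^{-k-2}$ as printed, no proof can exist, because the statement is false. Take $k=2$, $\chi_i(x)=(-1)^{x_i}$, and $\varphi=(1,\chi_1,\dots,\chi_d)$. Each of the $4\binom{d}{2}$ non-degenerate $2$-clauses equals $\frac14(1+\sigma_a\chi_a+\sigma_b\chi_b+\sigma_a\sigma_b\chi_a\chi_b)$. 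Its affine part has $\|w_S\|^2=3/16$ and squared error exactly $1/16=2^{-k-2}$. The claimed bound is then $2^{10}\cdot\frac{3}{16}(d+1)=192(d+1)$, which is smaller than $4\binom{d}{2}$ once $d\ge 98$.
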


We want to strengthen this to the following:
\begin{lemma}[Packing with for-loops]\label{lem:packing2}
    Let $\mathcal{S}$ be a collection of non-degenerate $k$-clauses. If $\varphi$ approximately satisfies local iteration alignment w.r.t. $\mathcal{S}$: 
    \begin{equation}
        \forall S \in \mathcal{S}, \exists w \in \mathbb{R}^m, \|w\|\leq \tau \text{ and } \E_{x \sim \text{U}[\{0,1\}^d]} [(w \cdot \varphi(x) - \mathcal{A}^l[[\textsc{AND}_S]](x))^2] \leq 2^{-\Theta(kl)}, 
    \end{equation}
    then $|\mathcal{S}| \leq 2^{\Theta(kl)} \tau^2 \E_x \|\varphi(x)\|^2$ .
\end{lemma}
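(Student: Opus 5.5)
We plan to follow the proof of \Cref{lem:packing} from \citet{boixadsera2024theorymodeldistillation}, which is a Gram-matrix/orthogonality packing argument. Write $g_S := \mathcal{A}^l[\textsc{AND}_S]$ for $S \in \mathcal{S}$ and $p_S := \E_x[g_S(x)]$, and let $w_S$ be the vectors given by the hypothesis. The argument compares two matrices: the Gram matrix $G_{S,S'} := \E_x[g_S(x) g_{S'}(x)]$ and the approximating matrix $\widehat G_{S,S'} := \E_x[\langle w_S,\varphi(x)\rangle\langle w_{S'},\varphi(x)\rangle] = w_S^\top \Sigma w_{S'}$, where $\Sigma := \E_x[\varphi\varphi^\top]$. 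If $G$ is well-conditioned (diagonally dominant), then so is $\widehat G$. The trace of $\widehat G$ is at most $\tau^2 \,\mathrm{tr}\,\Sigma = \tau^2\E\|\varphi\|^2$ times a rank-type factor. Standard inequalities, $|\mathcal{S}| \lesssim (\mathrm{tr}\,\widehat G)^2/\|\widehat G\|_F^2$ combined with $\mathrm{rank}$ bounds, or equivalently the near-orthogonality lemma used in their proof, then bound $|\mathcal{S}|$.

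\textbf{Step 1 (diagonal mass).} We will lower-bound $p_S \ge 2^{-\Theta(kl)}$. Under uniform $x$, the state $h_{n,l}$ equals $1$ whenever a specific chain of events holds. For $h_{n,l}=1$, the tree at vertex $n$ in round $l$ needs its $k$ literals satisfied. Those literals are either input bits (initialization or adjacency), which hold with probability $2^{-1}$ each, or previous-round states. Unrolling, it suffices to fix at most $k$ literals per round along a backward dependency tree. The number of forced bits could grow like $k^l$. To avoid this, we plan to fix the relevant states at round $t-1$ to be $1$ (or $0$) recursively only along \emph{one} chain. The cleanest plan is to exhibit an event of probability $2^{-O(kl)}$: for each round we choose one vertex whose bits are forced, and we use the self-consistency of forcing the same literals at all rounds. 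I would aim to force the adjacency/initialization literals in $S$ directly (at most $k$ bits), plus the state literals, handled by forcing the initial states and adjacency bits so that the configuration is a fixed point. This gives $p_S \ge 2^{-O(kl)}$, matching the paper's $2^{-\Theta(kl)}$. I expect this is the main obstacle: making the "fixed point" forcing work when $S$ contains negated state literals. The fallback is to accept a bound $2^{-O(k l)}$ via a union over rounds, forcing at most $k$ fresh input bits per round.

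\textbf{Step 2 (off-diagonal control).} We will upper-bound $|G_{S,S'}|$ relative to $\sqrt{p_Sp_{S'}}$. For distinct non-degenerate clauses, $g_Sg_{S'}$ requires an extra literal of $S'$ not in $S$ to be satisfied at the final round. Conditionally this halves the probability in the input-literal case, giving $G_{S,S'} \le p_S/2$-type bounds as in the original lemma. For state literals, we would bound by the same forcing argument, losing another $2^{-O(l)}$. Rather than requiring exact near-orthogonality, we will follow the original proof's use of centered functions $g_S - \E g_S$ restricted to a sub-collection. That proof pigeonholes over $2^{O(k)}$ patterns; here we pigeonhole over $2^{O(kl)}$ patterns, which is exactly where the exponent $\Theta(kl)$ enters.

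\textbf{Step 3 (conclude).} Using $\|\langle w_S,\varphi\rangle - g_S\|_2^2 \le 2^{-\Theta(kl)}$, chosen smaller than $p_S/4$ by Step 1, the triangle inequality transfers the near-orthogonality of $\{g_S\}$ to $\{\langle w_S,\varphi\rangle\}$. Then $\sum_S \|\langle w_S,\varphi\rangle\|_2^2 \gtrsim |\mathcal{S}|\,2^{-\Theta(kl)}$. Near-orthogonality gives $\sum_S \|\langle w_S,\varphi\rangle\|^2 \le 2\,\lambda$-type bounds; concretely, $\sum_S w_S^\top\Sigma w_S$ is controlled by the top eigenspace of $\Sigma$ via $\sum_S \langle w_S,\varphi\rangle^2 \le \tau^2\|\varphi\|^2$ times an overlap factor. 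This yields $|\mathcal{S}| \le 2^{\Theta(kl)}\tau^2\E\|\varphi\|^2$.
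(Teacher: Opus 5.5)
There is a genuine gap at the core of your plan. Steps 2--3 rely on near-orthogonality (diagonal dominance) of the Gram matrix $G_{S,S'}=\E[g_Sg_{S'}]$. These $\{0,1\}$-valued features overlap heavily, and pairwise bounds like $G_{S,S'}\le p_S/2$ cannot give diagonal dominance for a whole family. Concretely, fix a $(k-1)$-clause $C$ made of positive state literals and edge literals (no vertex literals), and let $S_j=C\cup\{x_{e_j}\}$ for many different edge variables $e_j$. Each $g_{S_j}$ is then the plain AND of $k$ distinct input literals, so $G=2^{-k-1}(I+J)$ with $J$ the all-ones matrix. The off-diagonal row sums grow like $|\mathcal{S}|\,2^{-k}$, and centering leaves the same rank-one-dominated structure. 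Pigeonholing over a bounded number of patterns cannot help either, since this family is already homogeneous. Also, the Lemma 3.8 argument that the paper adapts is Fourier-analytic, not a centering/pigeonhole argument.

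What does hold, and what the paper uses, is a lower bound on the \emph{smallest eigenvalue}. Expand each $g_S$ in the $\{\pm1\}$ Fourier basis and project onto the top-degree characters. The row of $S$ becomes $\hat g_S(I(S))\,\chi_{I(S)}$, and distinct variable sets give exactly orthogonal characters, so $W\Phi\Phi^\top W^\top\succeq W\Phi P\Phi^\top W^\top\succeq 2^{d}\min_S \hat g_S(I(S))^2\, I$. The paper then compares this with the trace/AM--GM upper bound on the determinant. Equivalently, $W\Sigma W^\top\succeq\lambda I$ together with $\|w_S\|\le\tau$ forces $\lambda|\mathcal{S}|\le\tau^2\,\mathrm{tr}\,\Sigma=\tau^2\E\|\varphi\|^2$. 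That inequality is the precise replacement for the ``overlap factor'' in your Step 3.

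Step 1 misses the structural observation that makes the for-loop harmless, and as stated it is false. The state literals of $S$ name table positions $dp_1,\dots,dp_c$ that are the same for every vertex. Unrolling $\mathcal{A}^l[\textsc{AND}_S]$ therefore only ever touches the states at those $c$ positions, so $g_S$ is a junta on the at most $b+c\le k$ initialization and adjacency bits named in $S$. There is no $k^l$ blow-up to avoid and nothing to force round by round: any nonzero mean or Fourier coefficient of $g_S$ is automatically a multiple of $2^{-k}$.

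However, the obstacle you anticipated with negated state literals is real and cannot be engineered away. Suppose $S$ has no vertex literals and contains $x_{dp_1}$ and $\neg x_{dp_2}$. Then all vertices share a single state $y_t$ for $t\ge 1$, and $y_t=S(A)\wedge y_{t-1}\wedge\neg y_{t-1}=0$ for $t\ge 2$, so $g_S\equiv 0$ whenever $l\ge 2$. In that case $w=0$ witnesses the hypothesis for each of the at least $\binom{\binom n2}{k-2}$ clauses obtained by varying the edge literals. The conclusion then fails for $k\ge 3$ and large $n$ (say with $\tau^2\E\|\varphi\|^2=1$). So the lemma needs a nondegeneracy assumption such as $\hat g_S(I(S))\neq 0$. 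The paper's proof makes this assumption tacitly when it asserts that the top coefficient is $\pm 2^{-\Theta(kl)}$, and any repaired version of your argument would have to assume it as well.
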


\begin{proof}
To demonstrate the structure of $\mathcal{A}^l[\textsc{AND}_S]$, we will perform a loop unrolling. Recall that the input to the inner tree has three parts: some bits to specify the vertex, which we will denote $x_{v,1} \ldots x_{v,\log n}$; some bits to query the graph adjacency matrix $x_{e,1} \ldots x_{e,\binom{n}{2}}$ and some bits to query the DP table $x_{dp,1} \ldots x_{dp,n}$. Fix a $k$-clause $S$. Let its index set be $\{v_1, \ldots, v_a; e_1, \ldots, e_b; dp_1, \ldots, dp_c\}$ where $a + b + c = k$ and denote by $z_I$ the literal for $x_I$ for some $I$ in the index set.  We have, for some initialization vector and graph adjacency $A$:
\begin{align}
    \mathcal{A}^l[\textsc{AND}_S](\textsc{Init}, A) &:= h_{n,l}(\textsc{Init}, A)\\
    &= \bigwedge_{i \in [a]} z_{v_i}(n)  \wedge\bigwedge_{j \in [b]} z_{e_j} (A)\wedge\bigwedge_{u \in [c]} z_{dp_u,l - 1}(\textsc{Init}, A)
\end{align}

Now, the conjunctions $\bigwedge_{i \in [a]} z_{v_i}$ defines a bipartition of the vertex set into two parts,. Denote by $c_{S,v} \in \{0,1\}$ the indicator function for the set carved out by $\bigwedge_{i \in [a]} z_{v_i}$ and remark that we know $c_{S,v}$ even before seeing any input (and thus they are constants w.r.t. $\mathcal{A}^l[\textsc{AND}_S]$). Furthermore, we write $\bigwedge_{j \in [b]} z_{e_j}(A)$ as $S(A)$ since this quantity depends only on the input graph. Thus:
\begin{align}
    \mathcal{A}^l[\textsc{AND}_S](\textsc{Init}, A) &= c_{S,n}  \wedge S(A)\bigwedge_{u \in [c]} z_{dp_u,l - 1}(\textsc{Init}, A)\\
    &= c_{S,n}  \wedge S(A) \wedge \bigwedge_{u \in [c]^+} c_{S,dp_u}  \wedge S(A) \wedge\bigwedge_{u_2\in [c]} h_{dp_{u_2},l - 2}(\textsc{Init}, A)\nonumber\\
    & \wedge\bigwedge_{u\in [c]^-}  \neg  \left(c_{S,dp_u}  \wedge S(A) \wedge\bigwedge_{u_2\in [c]} h_{dp_{u_2},l - 2}(\textsc{Init}, A)\right)
\end{align}

At layer $i$ of the for-loop, evaluating an entry asks for $c$ evaluation of the previous layer, naïvely giving us $c^l$ evaluations when completely unrolling all $l$ layers of $\mathcal{A}^l$. However, because the evaluations are only at the indices $dp_1,\ldots,dp_c$ which are determined by $S$, we do not need to fill out the whole DP table but only at these $c$ points. Thus, $\mathcal{A}^l[\textsc{AND}_S](\textsc{Init}, A)$ depends on $\textsc{Init}$ only through the $c$ bits and on $A$ only through $S(A)$ which looks at the $b$ bits of $A$. Thus, $\mathcal{A}^l[\textsc{AND}_S]$ is a function of $(b+c)$ bits ($\leq k$ bits) of its input, i.e., a $(b+c)$-junta.

Having established that $\mathcal{A}^l[\textsc{AND}_S]$ are functions of at most $k$ bits, we can use the same packing bound based on Fourier-analytic arguments of \citep{boixadsera2024theorymodeldistillation}. 

Recall that $\tau$-local-iteration alignment implies that: for every $S \in Z'_T$, there is a $w_S \in \mathbb{R}^m$ with $\|w_S\| \leq \tau$ such that, $\langle w, \varphi(x)\rangle = \mathcal{A}^l[\textsc{AND}_S(x)]$ for all $x \in \{-1,1\}^d$ (note that here we use the domain $\{\pm 1\}^d$ that works better with Fourier analytic arguments of boolean functions). Collect all such $w$ into the rows of a matrix $W \in \mathbb{R}^{|\mathcal{S}| \times m}$ and $\varphi(x)$ into the columns of some $\Phi \in \mathbb{R}^{m \times 2^d}$. We can derive the packing bound from the fact that orthogonal projection to the row span of $V \in \mathbb{R}^{\binom{d}{k} \times 2^d}, V_{A,x} = \chi_A(x)$ for all $A \in \binom{[d]}{k}$ and $\chi_A$ being the parity function of indices in $A$, has large norm. Let $P \in \mathbb{R}^{2^d \times 2^d}$ be the orthogonal projection to this subspace of low degree polynomials in $L^2(\{\pm 1\}^d)$ and $P^\top$ the projection to the orthogonal subspace 

First, we want to compute the coefficient of the degree $k$ term of $\mathcal{A}^l[\textsc{AND}_S](\textsc{Init}, A)$ when written as $\{\pm 1\}^d$-polynomial gives:
\begin{align}
    \mathcal{A}^l[\textsc{AND}_S](\textsc{Init}, A) &= h_{n,l}\\
    &= 2^{-(c+2) +1} (1+c_{S,n}) \cdot (1+S(A)) \cdot
    \prod_{u\in [c]}\left(1+ \sigma h_{dp_{u},l - 1}(\textsc{Init}, A))\right)-1
\end{align}
where $\sigma$ is either $1$ or $-1$ depending on the sign of the literal. Thus the leading term is of degree $k$ has coefficient $\pm 2^{-\Theta(kl)}$. Therefore, 
\begin{equation}
    [W\Phi P_k]_{S,x} = \pm 2^{-\Theta(kl)} \chi_{I(S)}, 
\end{equation}
where $I(S)$ is the set of indices of the input that appears in literals of $S$.

Therefore, 
\begin{align}
    W\Phi \Phi^\top W^\top \succeq W\Phi  P P^\top\Phi^\top W^\top = 2^{d - \Theta(kl)} I,
\end{align}
and one conclude that $|\textsc{det}(W\Phi \Phi^\top W^\top)| \geq 2^{(
d - \Theta(kl))|\mathcal{S}|}$. 

Using $\tau$-local-iteration alignment, we get the following for free:
\begin{lemma}[Claim B.4 \citep{boixadsera2024theorymodeldistillation}]
    We have: $\textsc{det}(W\Phi \Phi^\top W^\top) \leq (2^d (\E\|\varphi\|^2)^2 \tau^2/|\mathcal{S}|)^{|\mathcal{S}|}.$
\end{lemma}
\end{proof}
Combining the two gives:
\begin{equation}
    2^{(
d - \Theta(kl))|\mathcal{S}|} \leq  (2^d (\E\|\varphi\|^2)^2 \tau^2/|\mathcal{S}|)^{|\mathcal{S}|},
\end{equation}
which gives $|\mathcal{S}| \leq 2^{\Theta(kl)} (\E\|\varphi\|^2)^2 \tau^2$.

Finally, we give details on the DP procedures that stitch together all the root-prefix paths to find the true tree based on 0-1 loss.

\paragraph{Dynamic programming algorithm to infer final tree}  
For this section, recall that the concept class dictates that the decision tree can be different when processing different vertices of the graph. To this end, we let $t_i$ be the true decision tree for vertex $i$ and treat them as a different decision tree. To get back the full tree, we simply add a $\log n$-depth index selector at the beginning of the estimated tree. 

The problem is reduced to inferring $t_i$'s simultaneously. Recall that the set of possible clauses for tree $i$ is $\mathcal{S}_i$. For some weight function $u: \bigotimes_i \mathcal{S}_i \to \mathbb{R}$, define the valuation function:
\begin{equation}
    \text{val}(\widetilde{T}_1, \ldots, \widetilde{T}_n, u) := \sum_{S_i \in \text{Leaves}(T_i), i \in [n]} (2\widetilde{T}(S_1,\ldots,S_n) -1) u_{S_1,\ldots,S_n},
\end{equation}where $\widetilde{T}(S_1,\ldots,S_n)$ is defined as the computation of the template function $\mathcal{A}[\widetilde{T}]$ by replacing $\widetilde{T}_i$ with $S_i$ in only the first layer. In other words, even though we do not have any input $x$ to evaluate $\mathcal{A}[\widetilde{T}]$ at, we can still use the leaf node of the paths $S_1,\ldots,S_n$ to determine the hidden representation $h_{v,1}$  of the first layer, for each $v$'s. Determining all $h_{v,1}$ allows us to compute $\mathcal{A}[\widetilde{T}]$ by passing it to the next layers.  

Recall that the weight function for our dynamic program is: 
\begin{equation}
    v: \bigotimes_i \mathcal{S}_i \to \mathbb{R}, v_{S_1, \ldots, S_n} := \mathbb{E}_{x \sim \mathcal{D}}\left[(2f_\theta(x) -1)\prod_i\textsc{AND}(S_i) \right].
\end{equation}

Using this weight function, one gets a negative correlation of the $0$-$1$ loss:
\begin{align}
    \text{val}(\widetilde{T}_1, \ldots, \widetilde{T}_n, v) &= \sum_{S_i \in \text{Leaves}(T_i), i \in [n]} (2\widetilde{T}(S_1,\ldots,S_n) -1) \mathbb{E}_{x \sim \mathcal{D}}\left[(2f_\theta(x) -1)\prod_i\textsc{AND}(S_i) \right]\\
    &= \mathbb{E}_{x \sim \mathcal{D}}\left[(2f_\theta(x) - 1)\left(\sum_{S_i \in \text{Leaves}(T_i), i \in [n]}(2\widetilde{T}_i(S_i) -1)\prod_i\textsc{AND}(S_i) \right)\right].
\end{align}

Since for each input $x$ in the domain, by the definition of a decision tree, exactly one path $S_i$ is traversed for each tree at node $i$. For these correct path, $\widetilde{T}(S_1,\ldots,S_n) = \widetilde{T}(x)$. Thus we have:
\begin{align}
    \text{val}(\widetilde{T}_1, \ldots, \widetilde{T}_n, v) &= \mathbb{E}_{x \sim \mathcal{D}}\left[(2f_\theta(x) - 1)(2\widetilde{T}(x) - 1)\right],
\end{align} which is the $0-1$ loss for the estimated trees. 





In our algorithm, we use Hoeffding inequality to approximate $v$ with random sampling. Note that this step requires, in the worst case, with probability at least $1 -\delta/2$, approximating $|\mathcal{S}|^n$ entries of $v$ naively with error at most $\epsilon/\prod_i s_i$ where $s_i$ is a bound on the number of leaves of $t_i$ (naively $|\mathcal{S}|$), and runs in time $\text{poly}(m',n)$ where $m' = \text{poly}(1/\epsilon, \log(|S|^n/\delta))$ is the number of draws to obtain the Hoeffding bound. When choosing $R = r$ in the algorithm, $|\mathcal{S}|$ is of order $2^{O(lr)}(\E \|\varphi\|)^2 \tau^2$ based on the previous bound on $|\mathcal{S}|$ and the approximation of $v$ is done in $\text{poly}(n,l,r,1/\epsilon, \log(1/\delta))$. 

Finally, we can run the dynamic program that computes for each $S_1 \in \mathcal{S}_1$, each tree size $s'_1 = 0.. s_1$, for each $S_2 \in \mathcal{S}_2$, each tree size $s'_2 = 0.. s_2$, etc. the best subtrees $\widetilde{T}_i$ of size $s'_i$ rooted at the end of the clause $S_i$, for all $i \in [n]$. The runtime of this DP is computed as $|\mathcal{S}|^n \cdot s^n \cdot \text{poly}(n,l,r,1/\epsilon, \log(1/\delta)) = \text{poly}(2^{nlr}, (\E \|\varphi\|)^2 \tau^2, s^n, n,l,r,1/\epsilon, \log(1/\delta))$.

\section{Experiments}
We report more detailed diagnostics and statistics for our end-to-end implementation of the algorithm in \Cref{tab:algo2_phase2_diagnostics}. 
\begin{table*}[]
    \centering
    \begin{tabular}{ccccc}
depth & $k$ & paths / vertex & \begin{tabular}{c} candidate trees \\ / vertex \end{tabular} & source agreement\\
\hline
2 & 10 & \begin{tabular}{c}$68/47/88$\\$107/25/25$\end{tabular} & $2/2/2/2/2/2$ & $0.753$\\
2 & 50 & \begin{tabular}{c}$174/159/214$\\$189/201/68$\end{tabular} & $3/3/3/3/3/2$ & $0.748$\\
2 & 100 & \begin{tabular}{c}$200/200/200$\\$200/200/126$\end{tabular} & $2/3/2/2/3/2$ & $0.755$\\
2 & 200 & \begin{tabular}{c}$200/200/200$\\$200/200/210$\end{tabular} & $2/2/2/2/2/2$ & $0.755$\\
\hline
3 & 10 & \begin{tabular}{c}$127/86/147$\\$147/25/25$\end{tabular} & $1/1/1/1/1/1$ & $0.903$\\
3 & 50 & \begin{tabular}{c}$235/223/250$\\$247/252/125$\end{tabular} & $1/1/1/1/1/1$ & $0.906$\\
3 & 100 & \begin{tabular}{c}$302/281/293$\\$242/295/209$\end{tabular} & $1/1/1/1/1/1$ & $0.900$\\
3 & 200 & \begin{tabular}{c}$342/313/336$\\$294/312/319$\end{tabular} & $1/1/1/1/1/1$ & $0.907$\\
\hline
4 & 10 & \begin{tabular}{c}$180/104/219$\\$183/25/25$\end{tabular} & $2/1/1/1/1/1$ & $0.645$\\
4 & 50 & \begin{tabular}{c}$246/258/345$\\$254/324/212$\end{tabular} & $1/1/1/1/1/2$ & $0.649$\\
4 & 100 & \begin{tabular}{c}$432/391/399$\\$330/294/255$\end{tabular} & $1/1/1/2/1/2$ & $0.652$\\
4 & 200 & \begin{tabular}{c}$424/477/377$\\$424/476/347$\end{tabular} & $1/1/1/2/1/2$ & $0.701$\\
\hline
5 & 10 & \begin{tabular}{c}$212/120/277$\\$209/25/25$\end{tabular} & $2/2/2/2/2/2$ & $0.736$\\
5 & 50 & \begin{tabular}{c}$276/303/330$\\$278/368/217$\end{tabular} & $2/3/3/3/3/2$ & $0.740$\\
5 & 100 & \begin{tabular}{c}$444/434/413$\\$319/305/275$\end{tabular} & $3/3/6/2/3/2$ & $0.748$\\
5 & 200 & \begin{tabular}{c}$535/526/475$\\$416/490/379$\end{tabular} & $4/4/5/3/3/2$ & $0.739$\\
\end{tabular}
    \caption{Diagnostic statistics for Phase 2. ``Paths / vertex'' is the number of clauses retained for each per-vertex subtree after the validation-error pruning step, and ``candidate trees / vertex'' is the number of candidate trees generated by the phase-2 DP before the final product search. Source agreement is the agreement between the reconstructed local-iteration algorithm and the trained source network.}
    \label{tab:algo2_phase2_diagnostics}
\end{table*}

\newpage
\subsection{Details on Experimental Setups}

All experiments were implemented in Python (v3.12.13). Neural-network models were built using PyTorch (v2.10.0+cu128), graph construction and synthetic benchmark generation handled through NetworkX (v3.6.1). Numerical computation and data processing used NumPy (v2.2.6), SciPy (v1.15.2). We also use tqdm (v4.67.3) and graphviz (v0.21). Models were trained using the optimization and initialization routines provided by the default PyTorch stack, with Adam used as the optimizer in our learned pipelines.

We ran the experiments on Google Colab, and the hardware configuration used is summarized in Table~\ref{tab:hardware-specs}.

\begin{table}[H]
\centering
\caption{Hardware specifications.}
\label{tab:hardware-specs}
\begin{tabular}{ll}
\toprule
Component & Specification \\
\midrule
Architecture & x86\_64 \\
OS & Ubuntu 22.04.5 LTS \\
CPU & Intel(R) Xeon(R) CPU @ 2.20GHz \\
GPU & NVIDIA A100-SXM4-40GB \\
GPU memory & 40960 MiB (approximately 141 GB) \\
RAM & 83.47 GiB \\
\bottomrule
\end{tabular}
\end{table}

Table~\ref{tab:software-licenses} lists the licenses of the main software libraries used in the experiments.

\begin{table}[H]
\centering
\caption{Main software licenses.}
\label{tab:software-licenses}
\begin{tabular}{ll}
\toprule
Software & License \\
\midrule
PyTorch & BSD-3-Clause \\
NetworkX & BSD-3-Clause \\
NumPy & BSD-3-Clause \\
SciPy & BSD-3-Clause \\
tqdm & MPL-2.0 and MIT\\
graphviz & Eclipse Public License - v 2.0\\
\bottomrule
\end{tabular}
\end{table}

\section{LLM Usage Disclosure}

We used an LLM to help with code writing and with polishing the paper text.

\end{document}